\renewcommand*{\backrefalt}[4]{%
    \ifcase #1 \footnotesize{(Not cited.)}%
    \or        \footnotesize{(Cited on page~#2.)}%
    \else      \footnotesize{(Cited on pages~#2.)}%
    \fi}
\DeclareMathOperator*{\argmax}{argmax}
\DeclareMathOperator*{\argmin}{argmin}
\DeclareMathOperator*{\dom}{dom}
\DeclareMathOperator*{\conv}{conv}
\def\RR{\mathbb{R}}
\def\a{{\bm{a}}}
\def\b{\bm{b}}
\def\c{\bm{c}}
\def\e{\bm{e}}
\def\t{\bm{t}}
\def\p{\bm{p}}
\def\u{\bm{u}}
\def\v{\bm{v}}
\def\w{\bm{w}}
\def\x{\bm{x}}
\def\s{\bm{s}}
\def\gate{\mathsf{Gate}}
\def\mlp{\mathsf{MLP}}
\def\softmax{\mathrm{softmax}}
\def\topK{\mathrm{top}}
\def\ntoken{m}
\def\nexpert{n}
\def\dimtoken{d}
\def\vx{\bm{x}}
\def\vone{\bm{1}}
\def\vepsilon{\bm{\epsilon}}
\def\mW{W}
\def\mPi{\Pi}
\def\mGamma{\Gamma}
\def\RR{\mathbb{R}}
\DeclarePairedDelimiter\innerprod{\langle}{\rangle}%
\def\balpha{\bm{\alpha}}
\def\bbeta{\bm{\beta}}
\def\bmu{\bm{\mu}}
\def\bxi{\bm{\xi}}
\def\blambda{\bm{\lambda}}
\def\cB{\mathcal{B}}
\def\cC{\mathcal{C}}
\def\cU{\mathcal{U}}
\def\proj{\text{proj}}
\def\topk{\text{topk}}
\def\ones{\mathbf{1}}
\definecolor{theoremcolor}{rgb}{255, 255, 255}
\newmdtheoremenv{definition}{Definition}
\newmdtheoremenv{proposition}{Proposition}
\newmdtheoremenv{corollary}{Corollary}
\newmdtheoremenv{theorem}{Theorem}
\newmdtheoremenv{lemma}{Lemma}
  \definecolor{mydarkblue}{rgb}{0,0.50,0.30}
\title{Sparsity-Constrained Optimal Transport}
\author{%
Tianlin Liu\thanks{Work done during an internship at Google Research, Brain Team.} \\
University of Basel \\
\And
Joan Puigcerver\\
Google Research, Brain team\\
\And
Mathieu Blondel \\
Google Research, Brain team\\
}
\begin{document}

\maketitle

\begin{abstract}
Regularized optimal transport (OT) is now increasingly used
as a loss or as a matching layer in neural networks.
Entropy-regularized OT can be computed using the Sinkhorn
algorithm but it leads to fully-dense transportation plans,
meaning that all sources are (fractionally) matched with all targets.
To address this issue,
several works have investigated quadratic regularization instead.
This regularization preserves
sparsity and leads to unconstrained and smooth (semi) dual objectives,
that can be solved with off-the-shelf gradient methods.
Unfortunately, quadratic regularization does not give direct control over
the cardinality (number of nonzeros) of the transportation plan.
We propose in this paper a new approach for OT
with explicit cardinality constraints on the transportation plan.
Our work is motivated by an application to sparse mixture of experts,
where OT can be used to match input tokens such as image patches
with expert models such as neural networks. Cardinality constraints ensure
that at most $k$ tokens are matched with an expert, which is crucial
for computational performance reasons.
Despite the nonconvexity of cardinality constraints, we show that the
corresponding (semi) dual problems are tractable and can be solved with
first-order gradient methods. Our method can be thought as a middle ground
between unregularized OT (recovered when $k$ is small enough)
and quadratically-regularized OT (recovered when $k$ is large
enough). The smoothness of the objectives increases as $k$
increases, giving rise to a trade-off
between convergence speed and sparsity of the optimal plan.
\end{abstract}

\vspace{-0.5cm}
\section{Introduction}

Optimal transport (OT) distances
(a.k.a.\ Wasserstein or earth mover's distances)
are a powerful computational tool to compare probability distributions
and have found widespread use in machine learning
\citep{w_propagation,word_mover,wgan}.
While OT distances exhibit a unique ability to capture the geometry
of the data, their applicability has been largely hampered by their
high computational cost. Indeed, computing OT distances involves
a linear program, which takes super-cubic time to solve
using state-of-the-art network-flow algorithms
\citep{kennington_1980,ahuja_1988}. In addition,
these algorithms are challenging to implement and are not GPU or TPU friendly.
An alternative approach
consists instead in solving the so-called semi-dual using (stochastic)
subgradient methods \citep{carlier_2015} or quasi-Newton methods \citep{merigot_2011,kitagawa_2019}.
However, the semi-dual is a nonsmooth, piecewise-linear function, which can lead
to slow convergence in practice.

For all these reasons, the machine learning community has now largely switched
to regularized OT. Popularized by \citet{sinkhorn_distances},
entropy-regularized OT can be computed using the Sinkhorn
algorithm (\citeyear{sinkhorn}) and is differentiable w.r.t.\ its inputs,
enabling OT as a differentiable loss
\citep{sinkhorn_distances,feydy_2019} or as a layer in a
neural network \citep{genevay_2019,sarlin_2020,sander_2022}.
A disadvantage of entropic regularization, however, is that it leads to
fully-dense transportation plans. This is problematic in applications where
it is undesirable to (fractionally) match all sources
with all targets, e.g., for interpretability or for computational cost reasons.
To address this issue,
several works have investigated quadratic regularization instead
\citep{rot_mover,blondel_2018,lorenz_2021}. This regularization preserves
sparsity and leads to unconstrained and smooth (semi) dual objectives,
solvable with off-the-shelf algorithms.
Unfortunately, it does not give direct control over the
cardinality (number of nonzeros) of the transportation plan.

In this paper, we propose a new approach for OT with explicit
cardinality constraints on the columns of the transportation plan.
Our work is motivated by an
application to sparse mixtures of experts, in which we want each token (e.g. a
word or an image patch) to be matched with at most $k$ experts
(e.g., multilayer perceptrons). This is critical
for computational performance reasons, since the cost of processing a token is
proportional to the number of experts that have been selected for it.
Despite the nonconvexity of cardinality constraints, we show that the
corresponding dual and semi-dual problems are tractable and can be solved with
first-order gradient methods. Our method can be thought as a middle ground
between unregularized OT (recovered when $k$ is small enough)
and quadratically-regularized OT (recovered when $k$ is large
enough). We empirically show that the dual and semi-dual are increasingly
smooth as $k$ increases, giving rise to a trade-off between convergence speed
and sparsity. The rest of the paper is organized as follows.

\begin{itemize}[topsep=0pt,itemsep=2pt,parsep=2pt,leftmargin=10pt]

\item We review related work in \S\ref{sec:related_work}
and existing work on OT with convex regularization in \S\ref{sec:convex_regul}.

\item We propose in \S\ref{sec:nonconvex_regul} a framework for OT with
nonconvex regularization, based on the dual and semi-dual formulations.
We study the weak duality and the primal interpretation of these formulations.

\item We apply our framework in \S\ref{sec:sparse_ot} to OT with cardinality
constraints. We show that the dual and semi-dual formulations are tractable and
that smoothness of the objective increases as $k$ increases.
We show that our approach is equivalent to using squared $k$-support norm
regularization in the primal.

\item We validate our framework in \S\ref{sec:experiments}
and in Appendix~\ref{appendix:additional-experiments}
through a variety of experiments.

\end{itemize}

\begin{figure}[t]
\centering
\includegraphics[width=0.90 \textwidth]{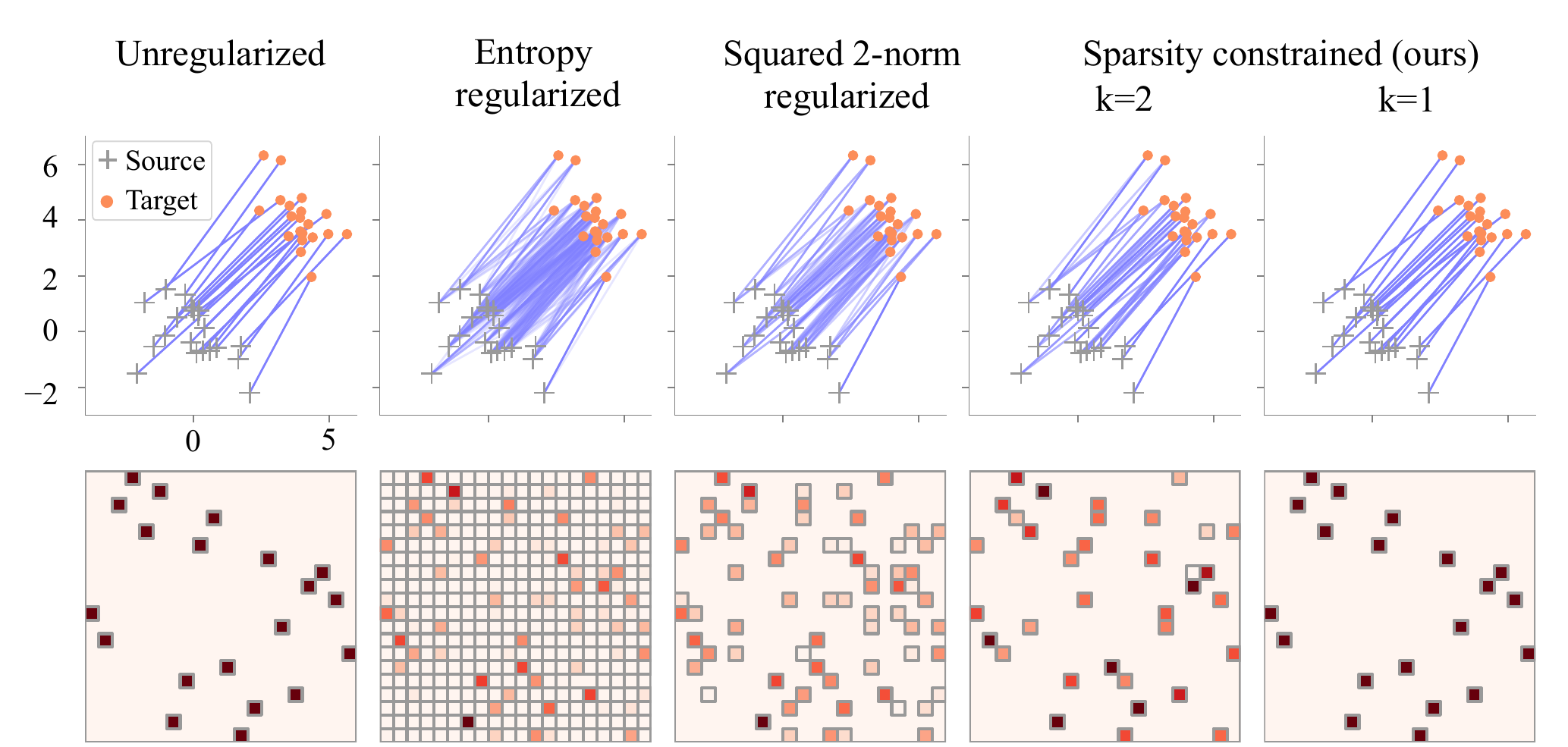}
\caption{\textbf{OT formulation comparison} ($m=n=20$ points),
with squared Euclidean distance cost, and with uniform
source and target distributions.
The unregularized OT plan is maximally sparse
and contains at most $m + n - 1$ nonzero elements.
On the contrary, with entropy-regularized OT, plans
are always fully dense, meaning that all points are fractionally matched
with one another (nonzeros of a transportation plan are indicated by small
squares). Squared $2$-norm (quadratically) regularized OT
preserves sparsity but the number of nonzero elements cannot be directly
controlled.  Our proposed sparsity-constrained OT allows us to set a
maximum number of nonzeros $k$ per column.  It recovers unregularized OT in
the limit case $k=1$ (Proposition~\ref{prop:limit_cases}) and
quadratically-regularized OT when $k$ is large enough.
It can be computed using solvers such as LBFGS or ADAM.
}
\label{fig:2d-comparison}
\end{figure}

\vspace{-0.1cm}
\paragraph{Notation and convex analysis tools.}

Given a matrix $T \in \RR^{m \times n}$, we denote its columns
by $\t_j \in \RR^m$ for $j \in [n]$.
We denote the non-negative orthant by $\RR_+^m$ and
the non-positive orthant by $\RR_{-}^m$.
We denote the probability simplex by $\triangle^m \coloneqq
\{\p \in \RR^m_+ \colon \langle \p, \ones \rangle = 1\}$.
We will also use $b \triangle^m$ to denote the set
$\{\t \in \RR_+^m \colon \langle \t, \ones \rangle = b \}$.
The convex conjugate of a function
$f \colon \RR^m \to \RR \cup \{\infty\}$ is defined by
$f^*(\s) \coloneqq \sup_{\t \in \dom(f)} \langle \s, \t \rangle - f(\t)$.
It is well-known that $f^*$ is convex (even if $f$ is not).
If the solution is unique, then its gradient is
$\nabla f^*(\s) = \argmax_{\t \in \dom(f)} \langle \s, \t \rangle - f(\t)$.
If the solution is not unique, then we obtain a subgradient.
We denote the indicator function of a set $\cC$ by $\delta_\cC$, i.e.,
$\delta_\cC(\t) = 0$ if $\t \in \cC$ and $\delta_\cC(\t) = \infty$ otherwise.
We denote the Euclidean projection onto the set $\cC$ by
$\proj_\cC(\s) = \argmin_{t \in \cC} \|\s - \t\|^2_2$.
The projection is unique when $\cC$ is convex, while it may not be when $\cC$ is
nonconvex.
We use $[\cdot]_+$ to denote the non-negative part, evaluated element-wise.
Given a vector $\s \in\RR^m$, we use $s_{[i]}$ to denote its $i$-th
largest value, i.e., $s_{[1]} \ge \dots \ge s_{[m]}$.

\section{Related work}
\label{sec:related_work}

\paragraph{Sparse optimal transport.}

OT with arbitrary strongly convex regularization is studied
by
\citet{rot_mover}
and
\citet{blondel_2018}.
More specifically,
quadratic regularization was studied in the discrete
\citep{blondel_2018,roberts_2017} and continuous settings
\citep{lorenz_2021}. Although it is known that quadratic regularization leads to
sparse transportation plans, it does not enable explicit control of the
cardinality (maximum number of nonzero elements), as we do.
In this work, we study the nonconvex regularization case and apply it
to cardinality-constrained OT.

\paragraph{Sparse projections.}

In this paper, we use $k$-sparse projections
as a core building block of our
framework. Sparse projections on the simplex and on the non-negative
orthant were studied by \citet{kyrillidis_2013} and \citet{bolte_2014},
respectively. These studies were later extended to more general sets
\citep{beck_2016}.
%
On the application side, sparse projections on the simplex were used
for structured prediction \citep{pillutla_2018,blondel_2020},
for marginalizing over discrete variables \citep{correia_2020}
and
for Wasserstein $K$-means \citep{fukunaga_2021}.

\paragraph{Sparse mixture of experts (MoE).}

In contrast to usual deep learning models where all parameters interact with all
inputs, a sparse MoE model activates only a small part of the model (``experts'')
in an input-dependent manner, thus reducing the overall computational cost of the model.
Sparse MoEs have been tremendously successful in scaling up deep learning
architectures in tasks including computer vision \citep{Riquelme2021scaling},
natural language processing \citep{Shazeer2017outrageously, Lewis2021base,
Lepikhin2021gshard, Roller2021hash, Fedus2021switch, Clark2022unified}, speech
processing \citep{You2022speechmoe2}, and multimodal learning
\citep{Mustafa2022multimodal}. In addition to reducing computational cost,
sparse MoEs have also shown other benefits, such as an enhancement in
adversarial robustness  \citep{Puigcerver2022adversarial}.
See \citet{Fedus2022review} for a recent survey. Crucial to a sparse MoE model is
its \textbf{routing mechanism} that decides which experts get which inputs.
Transformer-based MoE models typically route individual tokens
(embedded words or image patches).
To balance the assignments of tokens to experts, recent works cast the
assignment problem as entropy-regularized OT
\citep{Kool2021unbiased, Clark2022unified}. We go beyond
entropy-regularized OT and show that sparsity-constrained OT yields a more
natural and effective router.

\section{Optimal transport with convex regularization}
\label{sec:convex_regul}

In this section, we review OT with convex regularization,
which also includes the unregularized case.
For a comprehensive survey on computational OT,
see \citep{peyre_2017}.

\paragraph{Primal formulation.}

We focus throughout this paper on OT between discrete probability distributions
$\a \in \triangle^m$ and $\b \in \triangle^n$.
Rather than performing a pointwise comparison of the distributions,
OT distances compute the minimal effort, according to some  ground cost,
for moving the probability mass of one distribution to the other.
Recent applications of OT in machine learning typically add regularization
on the transportation plan $T$.
In this section, we apply \textbf{convex} regularization
$\Omega \colon \RR_+^m \to \RR_+ \cup \{\infty\}$
\textbf{separately} on the columns
$\t_j \in \RR_+^m$ of $T$ and consider the primal formulation
\begin{equation}
P_\Omega(\a, \b, C) \coloneqq
\min_{T \in \cU(\a, \b)}
\langle T, C \rangle + \sum_{j=1}^n \Omega(\t_j),
\label{eq:reg_primal}
\end{equation}
where
$C \in \RR_+^{m \times n}$ is a cost matrix
and
$\cU(\a, \b) \coloneqq
\{T \in \RR_+^{m \times n} \colon
  T \mathbf{1}_n = \a,
  T^\top \mathbf{1}_m = \b \}$
is the transportation polytope, which can be interpreted as the set of all
joint probability distributions with marginals $\a$ and $\b$.
It includes the Birkhoff polytope as a special case
when $m=n$ and $\a = \b = \frac{\ones_m}{m}$.

\paragraph{Dual and semi-dual formulations.}

Let us denote
\begin{equation}
\Omega_+^*(\s)
\coloneqq (\Omega + \delta_{\RR^m_+})^*(\s)
= \max_{\t \in \RR^m_+}
\langle \s, \t \rangle - \Omega(\t)
\label{eq:conj_Omega_plus}
\end{equation}
and
\begin{equation}
\Omega_{b}^*(\s)
\coloneqq (\Omega + \delta_{b \triangle^m})^*(\s)
= \max_{\t \in b \triangle^m} \langle \s, \t \rangle - \Omega(\t).
\label{eq:conj_Omega_j}
\end{equation}
The dual and semi-dual
corresponding to~\eqref{eq:reg_primal}
can then be written \citep{blondel_2018} as
\begin{equation}
D_\Omega(\a, \b, C) \coloneqq
\max_{\balpha \in \RR^m, \bbeta \in \RR^n}
\langle \balpha, \a \rangle + \langle \bbeta, \b \rangle
- \sum_{j=1}^n \Omega_+^*(\balpha + \beta_j \ones_m - \c_j)
\label{eq:reg_dual}
\end{equation}
and
\begin{equation}
S_\Omega(\a, \b, C)
\coloneqq \max_{\balpha \in \RR^m}
\langle \balpha, \a \rangle - P_\Omega^*(\balpha, \b, C)
= \max_{\balpha \in \RR^m}
\langle \balpha, \a \rangle -
\sum_{j=1}^n \Omega_{b_j}^*(\balpha - \c_j),
\label{eq:reg_semi_dual}
\end{equation}
where $P_\Omega^*$ denotes the conjugate in the first argument.
When $\Omega$ is convex (which also includes the unregularized case $\Omega =
0$), by strong duality, we have that
$P_\Omega(\a,\b, C) = D_\Omega(\a, \b, C) = S_\Omega(\a,\b, C)$
for all $\a \in \triangle^m$, $\b \in \triangle^n$ and
$C \in \RR_+^{m \times n}$.

\begin{table*}[t]
\normalsize
\centering
\normalsize
\begin{tabular}{lccc}\toprule
& $\Omega(\t)$ & $\Omega_+^*(\s)$ & $\Omega^*_{b}(\s)$ \\ \midrule
\addlinespace[0.3em]
Unregularized &
$0$ &
$\delta_{\RR_{-}^m}(\s)$ &
$b \max_{i \in [m]} s_i$ \\
\addlinespace[0.3em]
Negentropy &
$\langle \t, \log \t \rangle$ &
$\sum_{i=1}^m e^{s_i - 1}$ &
$\log \sum_{i=1}^m e^{s_i} - b$  \\
\addlinespace[0.3em]
Squared $2$-norm &
$\frac{1}{2} \|\t\|_2^2$ &
$\frac{1}{2}\sum_{i=1}^m [s_i]_+^2$ &
$\frac{1}{2} \sum_{i=1}^m \mathbb{1}_{s_{i} \ge \theta} (s_{i}^2 - \theta^2)$ \\
\addlinespace[0.3em]
Sparsity-constrained (top-$k$) &
$\frac{1}{2} \|\t\|_2^2 + \delta_{\cB_k}(\t)$ &
$\frac{1}{2} \sum_{i=1}^k [s_{[i]}]_+^2$ &
$\frac{1}{2} \sum_{i=1}^k \mathbb{1}_{s_{[i]} \ge \tau} (s_{[i]}^2 - \tau^2)$ \\
\addlinespace[0.3em]
Sparsity-constrained (top-$1$) &
$\frac{1}{2} \|\t\|_2^2 + \delta_{\cB_1}(\t)$ &
$\frac{1}{2} \max_{i \in [m]} [s_i]_+^2$ &
$b \max_{i \in [m]} s_i - \frac{\gamma}{2} b^2$ \\ \bottomrule
\end{tabular}
\caption{Summary of the conjugate expressions~\eqref{eq:conj_Omega_plus}
and~\eqref{eq:conj_Omega_j} for various choices of $\Omega$.
Here,
$\theta$ and $\tau$ are such that
$\sum_{i=1}^m [s_{i} - \theta]_+$
and
$\sum_{i=1}^k [s_{[i]} - \tau]_+$
sum to $b$
(\S\ref{sec:sparse_ot}), where $s_{[i]}$ denotes the $i$-th largest entry of the
vector $\s \in \RR^m$.
The top-$k$ and top-$1$ expressions above assume no ties in $\s$.}
\label{tab:expressions}
\end{table*}

\paragraph{Computation.}

With $\Omega = 0$ (without regularization),
then~\eqref{eq:conj_Omega_plus}
becomes the indicator function of the non-positive orthant,
leading to the constraint $\alpha_i + \beta_j \le c_{i,j}$.
The dual~\eqref{eq:reg_dual} is then a constrained linear program and
the most commonly used algorithm is the network flow solver.
On the other hand,~\eqref{eq:conj_Omega_j} becomes a $\max$ operator,
leading to the so-called $c$-transform
$\beta_j = \min_{i \in [m]} c_{i,j} - \alpha_i$
for all $j \in [n]$.
The semi-dual~\eqref{eq:reg_semi_dual} is then unconstrained, but it is a
nonsmooth piecewise linear function.

The key advantage of introducing strongly convex regularization
is that it makes the corresponding (semi) dual easier to
solve.
Indeed,~\eqref{eq:conj_Omega_plus} and~\eqref{eq:conj_Omega_j}
become ``soft'' constraints and $\max$ operators.

In particular,
when $\Omega$ is Shannon's
negentropy $\Omega(\t) = \gamma \langle \t, \log \t \rangle$,
where $\gamma$ controls the regularization strength,
then~\eqref{eq:conj_Omega_plus} and~\eqref{eq:conj_Omega_j} rely on the
exponential
and log-sum-exp operations. It is well known that
the primal~\eqref{eq:reg_primal} can then be solved using Sinkhorn's algorithm
\citep{sinkhorn_distances}, which amounts to using a block coordinate ascent
scheme w.r.t. $\balpha \in \RR^m$ and $\bbeta \in \RR^n$
in the dual~\eqref{eq:reg_dual}. As pointed out in \citet{blondel_2018},
the semi-dual is smooth (i.e., with Lipschitz gradients) but the dual is not.

When $\Omega$ is the quadratic regularization
$\Omega(\t) = \frac{\gamma}{2} \|\t\|^2_2$, then as shown in
\citet[Table~1]{blondel_2018},
\eqref{eq:conj_Omega_plus} and~\eqref{eq:conj_Omega_j} rely on the squared
relu and on the projection onto the simplex.
However, it is shown empirically that a block coordinate ascent scheme
in the dual~\eqref{eq:reg_dual} converges slowly. Instead, \citet{blondel_2018}
propose to use LBFGS both to solve the dual and the semi-dual.
Both the dual and the semi-dual are
smooth \citep{blondel_2018}, i.e., with Lipschitz gradients.
For both types of regularization,
when $\gamma \to 0$, we recover unregularized OT.

\paragraph{Recovering a plan.}

If $\Omega$ is strictly convex,
the unique optimal solution $T^\star$ of~\eqref{eq:reg_primal} can be
recovered from an optimal solution $(\balpha^\star, \bbeta^\star)$ of
the dual~\eqref{eq:reg_dual} by
\begin{equation}
\t^\star_j
= \nabla \Omega^*_+(\balpha^\star + \beta_j^\star \ones_m - \c_j)
\quad \forall j \in [n]
\label{eq:dual_primal_link}
\end{equation}
or from an optimal solution $\balpha^\star$ of the semi-dual
\eqref{eq:reg_semi_dual} by
\begin{equation}
\t^\star_j
= \nabla \Omega^*_{b_j}(\balpha^\star - \c_j)
\quad \forall j \in [n].
\label{eq:semi_dual_primal_link}
\end{equation}
If $\Omega$ is convex but not strictly so,
recovering $T^\star$ is more involved; see Appendix
\ref{proof:dual_primal_link} for details.

\section{Optimal transport with nonconvex regularization}
\label{sec:nonconvex_regul}

In this section, we again focus
on the primal formulation~\eqref{eq:reg_primal}, but now study the case when
the regularization $\Omega \colon \RR_+^m \to \RR_+ \cup \{\infty\}$
is \textbf{nonconvex}.

\paragraph{Concavity.}

It is well-known that the conjugate function is always convex, even when the
original function is not. As a result, even if the conjugate expressions
\eqref{eq:conj_Omega_plus}
and~\eqref{eq:conj_Omega_j} involve nonconcave maximization problems in the
variable $\t$, they induce convex functions in the variable $\s$. We can
therefore make the following elementary remark:
the dual~\eqref{eq:reg_dual} and the semi-dual~\eqref{eq:reg_semi_dual} are
\textbf{concave} maximization problems, \textbf{even if} $\Omega$ is nonconvex.
This means that we can solve them to arbitrary precision \textbf{as long as}
we know how to compute
the conjugate expressions~\eqref{eq:conj_Omega_plus} and
\eqref{eq:conj_Omega_j}.
This is generally hard but we will see in \S\ref{sec:sparse_ot}
a setting in which these expressions can be computed \textbf{exactly}.
We remark that the identity
$S_\Omega(\a, \b, C)
= \max_{\balpha \in \RR^m}
\langle \balpha, \a \rangle - P_\Omega^*(\balpha, \b, C)$ still holds
even when $\Omega$ is nonconvex.

\paragraph{The semi-dual upper-bounds the dual.}

Of course, if $\Omega$ is nonconvex, only weak duality holds, i.e., the dual
\eqref{eq:reg_dual} and semi-dual~\eqref{eq:reg_semi_dual} are lower bounds of
the primal~\eqref{eq:reg_primal}.
The next proposition clarifies that the semi-dual is actually an
upper-bound for the dual
(a proof is given in Appendix~\ref{proof:weak_duality}).
\begin{proposition}{Weak duality}
\label{prop:weak_duality}

Let $\Omega \colon \RR_+^m \to \RR_+ \cup \{\infty\}$ (potentially nonconvex). 
For all
$\a \in \triangle^m$, $\b \in \triangle^n$ and $C \in \RR_{m \times n}^+$
\begin{equation}
D_\Omega(\a, \b, C) \le S_\Omega(\a, \b, C) \le P_\Omega(\a, \b, C).
\end{equation}
\end{proposition}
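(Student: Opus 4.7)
The plan is to prove the two inequalities separately by Fenchel--Young-style arguments, each of which uses only the definition of the relevant conjugate and feasibility of the primal variable; crucially, neither step requires strong duality or convexity of $\Omega$.

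For $S_\Omega(\a,\b,C) \le P_\Omega(\a,\b,C)$: Any $T \in \cU(\a,\b)$ has columns $\t_j \in b_j \triangle^m$ (from the column-sum constraint $T^\top \ones_m = \b$) and satisfies $\sum_{j=1}^n \t_j = \a$ (from the row-sum constraint). The definition of $\Omega_{b_j}^*$ in \eqref{eq:conj_Omega_j} immediately yields the Fenchel--Young inequality $\langle \balpha - \c_j, \t_j\rangle - \Omega(\t_j) \le \Omega_{b_j}^*(\balpha - \c_j)$ for every $\balpha \in \RR^m$. Summing over $j$ and using $\sum_j \t_j = \a$ together with $\sum_j \langle \c_j, \t_j\rangle = \langle T, C\rangle$ gives $\langle \balpha, \a\rangle - \sum_j \Omega_{b_j}^*(\balpha - \c_j) \le \langle T, C\rangle + \sum_j \Omega(\t_j)$. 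Taking the supremum over $\balpha$ on the left and the infimum over $T \in \cU(\a,\b)$ on the right produces the desired bound.

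For $D_\Omega(\a,\b,C) \le S_\Omega(\a,\b,C)$: The key observation is that $b_j \triangle^m \subseteq \RR^m_+$ and $\langle \ones, \t\rangle = b_j$ for every $\t \in b_j \triangle^m$. Hence for any $\beta_j \in \RR$ and $\s \in \RR^m$, I can rewrite $\Omega_{b_j}^*(\s) = \max_{\t \in b_j \triangle^m}\langle \s + \beta_j \ones, \t\rangle - \Omega(\t) - \beta_j b_j$ and then relax the feasible set to $\RR^m_+$ to obtain $\Omega_{b_j}^*(\s) \le \Omega_+^*(\s + \beta_j \ones) - \beta_j b_j$. Applying this with $\s = \balpha - \c_j$, summing over $j$ so that $\sum_j \beta_j b_j = \langle \bbeta, \b\rangle$, negating, and adding $\langle \balpha, \a\rangle$ shows that the semi-dual objective at $\balpha$ is at least the dual objective at $(\balpha, \bbeta)$ for any $\bbeta$. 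Taking suprema on both sides gives $D_\Omega \le S_\Omega$.

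Neither step really has a hard part: both reduce to Fenchel--Young used once, combined with a set-inclusion relaxation. The only thing requiring care is the bookkeeping---tracking the direction of each inequality when suprema become lower bounds, and recognizing that the column-sum constraint built into $\cU(\a,\b)$ is exactly what makes $b_j \triangle^m$ (rather than only $\RR^m_+$) the relevant domain for the semi-dual's conjugate.
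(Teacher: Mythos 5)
Your proof is correct, but it is organized differently from the paper's. The paper proves the whole chain $P_\Omega \ge S_\Omega \ge D_\Omega$ at once from a single Lagrangian $L(T,\balpha,\bbeta)$ (keeping $T\in\RR_+^{m\times n}$ explicit), applying the minimax inequality twice: exchanging $\max_\balpha$ with $\min_T$ yields $P_\Omega \ge S_\Omega$, and further exchanging $\max_\bbeta$ with $\min_T$ yields $S_\Omega \ge D_\Omega$; the semi-dual appears because $\min_{T}\max_{\bbeta}$ forces the column-sum constraint, turning the inner problem into one over $b_j\triangle^m$. You instead prove the two inequalities separately and pointwise: $S_\Omega\le P_\Omega$ is a direct Fenchel--Young bound summed over columns of a feasible $T$, and $D_\Omega\le S_\Omega$ follows from the conjugate comparison $\Omega_{b_j}^*(\s)\le\Omega_+^*(\s+\beta_j\ones)-\beta_j b_j$, obtained by adding the constant shift $\beta_j\langle\ones,\t\rangle=\beta_j b_j$ on $b_j\triangle^m$ and then relaxing the feasible set to $\RR_+^m$. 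Both arguments use nothing beyond definitions and set inclusion, so neither needs convexity of $\Omega$. Your route is more elementary and self-contained (in particular, the inequality between the two conjugates is a clean standalone fact), while the paper's Lagrangian derivation explains structurally where the dual and semi-dual come from and why they are the two possible orderings of the same saddle-point exchange. Your bookkeeping of inequality directions when passing to suprema and infima is handled correctly in both steps.
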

Therefore, if the goal is to compute approximately $P_\Omega(\a, \b,
C)$, which involves an intractable nonconvex problem in general, it may
be advantageous to use
$S_\Omega(\a, \b, C)$ as a proxy, rather than $D_\Omega(\a, \b, C)$.
However, for the specific choice of $\Omega$ in \S\ref{sec:sparse_ot},
we will see that $D_\Omega(\a, \b, C)$ and $S_\Omega(\a, \b, C)$
actually coincide, i.e.,
$D_\Omega(\a, \b, C) = S_\Omega(\a, \b, C) \le P_\Omega(\a, \b, C)$.

\paragraph{Recovering a plan.}

Many times, the goal is not to compute the quantity $P_\Omega(\a, \b, C)$
itself, but rather the associated OT plan.
If $\Omega$ is nonconvex,
this is again intractable due to the nonconvex nature of the problem.
As an approximation, given
an optimal solution $(\balpha^\star, \bbeta^\star)$ of the dual or
an optimal solution $\balpha^\star$ of the semi-dual,
we propose to recover a transportation plan
with~\eqref{eq:dual_primal_link} and
\eqref{eq:semi_dual_primal_link},
just like we would do in the convex $\Omega$ case. The following proposition
clarifies that the optimal transportation plan $T^\star$ that we get
corresponds to a convex relaxation of the original nonconvex problem
\eqref{eq:reg_primal}.
A proof is given in Appendix~\ref{proof:primal_interpretation}.
\begin{proposition}{Primal interpretation}
\label{prop:primal_interpretation}

Let $\Omega \colon \RR_+^m \to \RR_+ \cup \{\infty\}$ 
(potentially nonconvex). For all
$\a \in \triangle^m$, $\b \in \triangle^n$ and $C \in \RR_+^{m \times n}$
\begin{align}
D_\Omega(\a, \b, C)
&= \min_{\substack{T \in \RR^{m \times n}\\T\ones_n=\a\\T^\top\ones_m=\b}}
\langle T, C \rangle + \sum_{j=1}^n \Omega_+^{**}(\t_j) 
= P_{\Omega^{**}}(\a, \b, C) \\
S_\Omega(\a, \b, C)
&=\min_{\substack{T \in \RR^{m \times n}\\T\ones_n=\a}}
\langle T, C \rangle + \sum_{j=1}^n \Omega_{b_j}^{**}(\t_j)
= P_{\Omega^{**}}(\a, \b, C).
\end{align}
\end{proposition}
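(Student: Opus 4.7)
The plan is to establish both identities via Fenchel--Moreau biconjugation applied to the parametric value function of the regularized primal, exploiting the inf-convolution structure induced by the linear marginal constraints.

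\textbf{Semi-dual identity.} I would first define the parametric value $v(\a) := \inf_{T\ones_n = \a} \langle T, C\rangle + \sum_j \Omega_{b_j}(\t_j)$, with $\b$ and $C$ held fixed and $\Omega_{b_j} := \Omega + \delta_{b_j \triangle^m}$ extended by $+\infty$ off $\RR_+^m$. A direct computation of the Fenchel conjugate in $\a$ gives $v^*(\balpha) = \sum_j \Omega_{b_j}^*(\balpha - \c_j)$, whence by definition $v^{**}(\a) = S_\Omega(\a, \b, C)$. Since the constraint $T\ones_n = \a$ reads $\sum_j \t_j = \a$, the function $v$ is (up to the linear term $\langle T, C\rangle$) an inf-convolution of the column-wise functions $\Omega_{b_j}$. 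Applying the standard inf-convolution/sum-of-conjugates duality in finite dimensions yields
\[
S_\Omega(\a, \b, C) = \min_{T\ones_n = \a} \langle T, C\rangle + \sum_{j=1}^n \Omega_{b_j}^{**}(\t_j),
\]
which is the first equality in the semi-dual statement.

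\textbf{Reduction to $P_{\Omega^{**}}$.} Because $\dom(\Omega_{b_j}^{**}) \subseteq b_j \triangle^m$, the constraint $T\ones_n = \a$ together with the implicit domain constraints automatically enforces $T \in \cU(\a, \b)$. It then remains to replace $\Omega_{b_j}^{**}$ by $\Omega^{**}$ on $b_j \triangle^m$, which I would argue via the sandwich $\Omega^{**} + \delta_{b_j \triangle^m} \le \Omega_{b_j}^{**} \le \Omega_{b_j}$: the leftmost expression is convex lsc and bounded above by $\Omega_{b_j}$, hence dominated by the convex envelope, and the resulting identification propagates to the biconjugates upon using $\Omega_{b_j}^{***} = \Omega_{b_j}^*$.

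\textbf{Dual identity.} The dual case runs in parallel but with both $(\a, \b)$ as parameters. The Fenchel conjugate of the joint value function $(\a, \b) \mapsto P_\Omega(\a, \b, C)$ equals $(\balpha, \bbeta) \mapsto \sum_j \Omega_+^*(\balpha + \beta_j \ones_m - \c_j)$; the same inf-convolution biconjugation argument then delivers the desired primal interpretation. Here the identification $\Omega_+^{**} = \Omega^{**}$ is immediate since $\dom(\Omega) \subseteq \RR_+^m$ already enforces non-negativity, so no slice-restriction argument is needed.

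\textbf{Main obstacle.} The delicate step is the identification of $\Omega_{b_j}^{**}$ with $\Omega^{**}$ on $b_j \triangle^m$ used in the semi-dual case. The inequality $\Omega^{**} + \delta_{b_j \triangle^m} \le \Omega_{b_j}^{**}$ is immediate from the definitions, but establishing the reverse direction (and hence that the two minimization problems share the same value) requires a careful argument relating the convex envelope of a function restricted to a convex slice to the slice of its full convex envelope; this is where the bulk of the work lies.
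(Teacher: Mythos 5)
Your derivation of the first equality in each display is correct and is, in substance, the same argument as the paper's: the paper introduces auxiliary variables $\bmu_j = \balpha - \c_j$ (resp.\ $\bmu_j = \balpha + \beta_j \ones_m - \c_j$) with Lagrange multipliers $\t_j$ and swaps $\max$ and $\min$, the swap being legitimate because $\Omega_{b_j}^*$ and $\Omega_+^*$ are convex even when $\Omega$ is not; the inner maximization over $\bmu_j$ then yields $\Omega_{b_j}^{**}(\t_j)$ (resp.\ $\Omega_+^{**}(\t_j)$). Your packaging --- writing $S_\Omega(\cdot,\b,C)$ as the biconjugate of the value function $v$, recognizing $v$ as an inf-convolution of the $f_j = \langle \cdot, \c_j\rangle + \Omega_{b_j}$, and invoking the sum-of-conjugates rule --- is the same Fenchel duality in different clothing. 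Two small points of care: you should say explicitly why the inf-convolution of the $f_j^{**}$ is exact and closed (here each $f_j^*$ is finite-valued because $\dom f_j \subseteq b_j\triangle^m$ is compact, so no qualification condition is at issue and the $\min$ is attained), and for the dual the joint value function in $(\a,\b)$ is a marginal function under a linear map rather than a plain inf-convolution, so you need the slightly more general image-function calculus; the paper's explicit Lagrangian handles both multipliers at once and avoids this bookkeeping.

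Regarding your ``main obstacle'': you are right that it is the only genuinely delicate step, and you should know that the paper's own proof does not resolve it either --- it stops at the first equality in each line and never justifies the final ``$=P_{\Omega^{**}}$''. In fact the identification cannot hold for arbitrary nonconvex $\Omega$: one always has $\Omega_{b}^{**} \ge \Omega^{**} + \delta_{b\triangle^m}$, but the reverse inequality fails whenever $\conv(\dom\Omega)\cap b\triangle^m$ is strictly larger than $\conv(\dom\Omega\cap b\triangle^m)$ (e.g., $\dom\Omega=\{(1,0,0),(0,2,0),(0,0,0)\}$ with $\Omega=0$ and $b=1$: the point $(0,1,0)$ lies in the left-hand set but $\Omega_b^{**}(0,1,0)=\infty$), and if both displayed identities held with the same right-hand side one would get $D_\Omega=S_\Omega$ for every $\Omega$, which the paper itself only claims for the specific $\Omega$ of \S\ref{sec:sparse_ot}. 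So do not try to prove the slice-restriction identity in general; the honest reading is that the content of the proposition is the pair of first equalities (which you have), with the ``$=P_{\Omega^{**}}$'' identification verified separately for the sparsity-constrained $\Omega$ via the explicit biconjugate computation of Proposition~\ref{prop:biconjugates}. For the dual line your observation that $\Omega_+^{**}=\Omega^{**}$ because $\dom\Omega\subseteq\RR^m_+$ is correct and suffices there.
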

In the above, $f^{**}$ denotes the biconjugate of $f$, the tightest convex lower
bound of $f$.
When $\Omega$ is nonconvex,
deriving an expression for $\Omega^{**}_+$ and $\Omega^{**}_{b_j}$ could be
challenging in general. Fortunately, for the choice of $\Omega$ in
\S\ref{sec:sparse_ot}, we are able to do so.
When a function is convex and closed, its biconjugate is
itself. As a result, if $\Omega$ is a convex and closed function, we
recover $P_\Omega(\a, \b, C) = D_\Omega(\a, \b, C) = S_\Omega(\a, \b, C)$
for all $\a \in \triangle^m$, $\b \in \triangle^n$ and
$C \in \RR_+^{m \times n}$.

\paragraph{Summary: proposed method.}

To approximately solve the primal OT objective
\eqref{eq:reg_primal} when $\Omega$ is nonconvex, we proposed to solve the
dual~\eqref{eq:reg_dual} or
the semi-dual~\eqref{eq:reg_semi_dual}, which by Proposition~
\ref{prop:weak_duality} lower-bound the primal.
We do so by solving the concave maximization problems in
\eqref{eq:reg_dual} and
\eqref{eq:reg_semi_dual} by gradient-based algorithms, such as LBFGS
\citep{lbfgs} or ADAM \citep{ADAM}.
When a transportation plan is needed, we recover it from
\eqref{eq:dual_primal_link} and
\eqref{eq:semi_dual_primal_link}, as we would do with convex $\Omega$.
Proposition
\ref{prop:primal_interpretation} clarifies what objective function
this plan optimally solves. When learning with OT as a loss,
it is necessary to differentiate through
$S_\Omega(\a, \b, C)$. From Danskin's theorem,
the gradients $\nabla_\a S_\Omega(\a, \b, C) \in \RR^m$
and $\nabla_C S_\Omega(\a, \b, C) \in \RR^{m \times n}$
are given by $\balpha^\star$ and $T^\star$ from
\eqref{eq:semi_dual_primal_link}, respectively.

\section{Quadratically-regularized OT with sparsity constraints}
\label{sec:sparse_ot}

In this section, we build upon \S\ref{sec:nonconvex_regul} to develop
a regularized OT formulation with sparsity constraints.

\paragraph{Formulation.}

Formally,
given $\t \in \RR^m$,
let us define the $\ell_0$ pseudo norm by
\begin{equation}
    \|\t\|_0 \coloneqq |\{t_j \neq 0 \colon j \in [m]\}|,
\end{equation}
i.e., the number of nonzero elements in $\t$.
For $k \in \{1, \dots, m\}$, we denote the $\ell_0$ level sets by
\begin{equation}
\cB_k \coloneqq \{\t \in \RR^m \colon \|\t\|_0 \le k\}.
\end{equation}
Our goal in this section is then to approximately solve the following
quadratically-regularized optimal transport problem with cardinality constraints
on the columns of $T$
\begin{equation}
\min_{\substack{T \in \cU(\a, \b)\\
T \in \cB_k \times \dots \times \cB_k}}
\langle T, C \rangle + \frac{\gamma}{2} \|T\|_2^2,
\label{eq:sparsity_constrained_ot}
\end{equation}
where $\gamma > 0$ controls the regularization strength
and where $k$ is assumed large enough to make the problem feasible.
Problem \eqref{eq:sparsity_constrained_ot} is a special case
of~\eqref{eq:reg_primal} with the nonconvex regularization
\begin{equation}
\Omega = \frac{\gamma}{2} \|\cdot\|_2^2 + \delta_{\cB_k}.
\label{eq:Omega_l2_l0}
\end{equation}
We can therefore apply the methodology
outlined in \S\ref{sec:nonconvex_regul}.
If the cardinality constraints need to be applied to the rows instead of to the
columns, we simply transpose the problem.

\begin{figure}[t]
\centering
\includegraphics[width=0.90\textwidth]{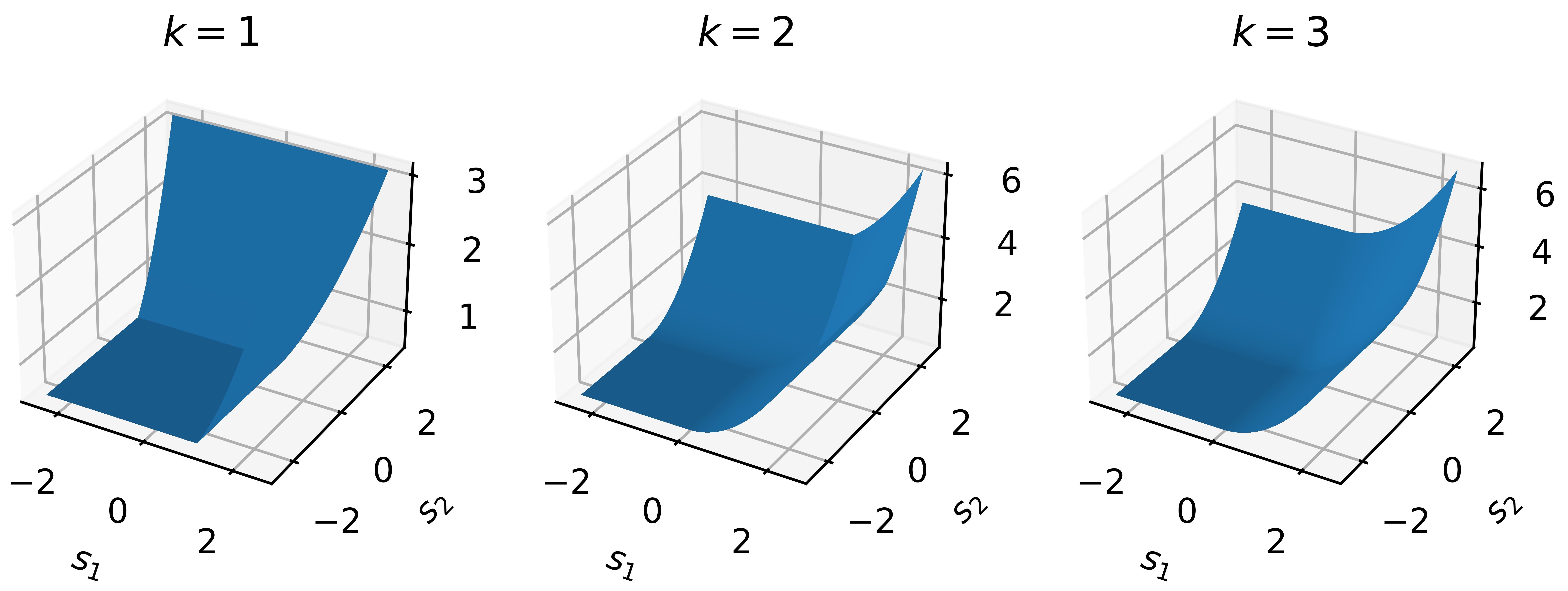}
\caption{
\textbf{Increasing $k$ increases smoothness.}
Let $\s = (s_1, s_2, 0, 1)$.
We visualize $\Omega^*_+(\s)$, defined in~\eqref{eq:l2ind_topk}
and derived in Table~\ref{tab:expressions}, when varying $s_1$ and $s_2$.
It can be interpreted
as a relaxation of the indicator function of the non-positive orthant.
The conjugate $\Omega^*_{b}(\s)$ (not shown) can be interpreted as a
relaxed max operator, scaled by $b$. In both cases, the smoothness increases
when $k$ increases.
}
\label{fig:link-k-with-smoothness}
\end{figure}

\paragraph{Computation.}

We recall that in order to solve the dual~\eqref{eq:reg_dual} or the semi-dual
\eqref{eq:reg_semi_dual}, the main quantities that we need to be able to
compute are the conjugate
expressions~\eqref{eq:conj_Omega_plus} and~\eqref{eq:conj_Omega_j}, as
well as their gradients.
While this is intractable for general nonconvex $\Omega$,
with the choice of $\Omega$ in~\eqref{eq:Omega_l2_l0},
we obtain
\begin{align}
\label{eq:l2ind_topk}
\Omega_+^*(\s)
&= \max_{\t \in \RR^m_+ \cap \cB_k}
\langle \s, \t \rangle - \frac{1}{2} \|\t\|^2_2 \\
\label{eq:l2max_topk}
\Omega_{b}^*(\s)
&= \max_{\t \in b \triangle^m \cap \cB_k}
\langle \s, \t \rangle - \frac{1}{2} \|\t\|_2^2,
\end{align}
where, without loss of generality, we assumed $\gamma = 1$.
Indeed, when $\gamma \neq 1$, we can simply use the property
$(\gamma f)^* = \gamma f^*(\cdot / \gamma)$.
From the envelope theorem of \citet[Theorem 10.31]{rockafellar_2009},
the gradients are given by the corresponding argmax
problems and we obtain
\begin{align}
\nabla \Omega^*_+(\s)
&= \argmax_{\t \in \RR^m_+ \cap \cB_k}
\langle \s, \t \rangle - \frac{1}{2} \|\t\|^2_2
= \proj_{\RR^m_+ \cap \cB_k}(\s)
\\
\nabla \Omega^*_{b}(\s)
&= \argmax_{\t \in b \triangle^m \cap \cB_k}
\langle \s, \t \rangle - \frac{1}{2} \|\t\|_2^2
= \proj_{b \triangle^m \cap \cB_k}(\s).
\end{align}
Therefore, computing an optimal solution $\t^\star$ reduces to the $k$-sparse
projections of $\s$ onto the non-negative orthant and onto the simplex (scaled
by $b > 0$), respectively.
When $\t^\star$ is not unique (i.e., $\s$ contains ties), 
the argmax is set-valued. We discuss this situation in more details
in Appendix \ref{proof:dual_primal_link}.

Fortunately, despite the nonconvexity of the set $\cB_k$,
it turns out that both $k$-sparse projections can be computed \textbf{exactly}
\citep{kyrillidis_2013,bolte_2014,beck_2016}
by composing the unconstrained projection onto the original set
with a top-$k$ operation:
\begin{align}
\label{eq:k-sparse-nn}
\proj_{\RR^m_+ \cap \cB_k}(\s)
&= \proj_{\RR^m_+}(\topk(\s))
= [\topk(\s)]_+ \\
\label{eq:k-sparse-simplex}
\proj_{b \triangle^m \cap \cB_k}(\s)
&= \proj_{b \triangle^m}(\topk(\s))
= [\topk(\s) - \tau \ones_m]_+,
\end{align}
for some normalization constant $\tau \in \RR$,
such that the solution sums to $b$.
Here,
$\topk(\s)$ is defined such that
$[\topk(\s)]_i = s_i$ if $s_i$ is in the top-$k$ elements
of $\s$ and
$[\topk(\s)]_i = -\infty$ otherwise.
The $k$-sparse projection on the simplex is also known as top-$k$ sparsemax
\citep{pillutla_2018,blondel_2020,correia_2020}.
Plugging these solutions back into $\langle \s, \t \rangle
- \frac{1}{2} \|\t\|^2_2$, 
we obtain the expressions in Table~\ref{tab:expressions}
(a proof is given in Appendix~\ref{proof:closed_forms}).

Computing~\eqref{eq:k-sparse-nn} or~\eqref{eq:k-sparse-simplex}
requires a top-$k$ sort and the projection of a vector of size at most $k$.
A top-$k$ sort can be computed in $O(m \log k)$ time,
$\proj_{\RR^m_+}$ simply amounts to the non-negative part $[\cdot]_+$
and computing $\tau$, as needed for $\proj_{b \triangle^m}$,
can be computed in $O(k)$ time \citep{michelot,duchi},
by reusing the top-$k$ sort.\looseness=-1
We have thus obtained an efficient way to
compute the conjugates
\eqref{eq:conj_Omega_plus} and~\eqref{eq:conj_Omega_j}.
The total complexity per LBFGS or ADAM iteration is $O(mn \log k)$.

\paragraph{Recovering a plan.}

Assuming no ties in 
$\balpha^\star + \beta_j^\star \ones_m - \c_j$ 
or in
$\balpha^\star - \c_j$,
the corresponding column of the transportation plan is uniquely determined by
$\nabla \Omega^*_+(\balpha^\star + \beta_j^\star \ones_m - \c_j)$
or
$\nabla \Omega^*_{b_j}(\balpha^\star - \c_j)$,
respectively. 
From \eqref{eq:k-sparse-nn} and \eqref{eq:k-sparse-simplex},
this column belongs to $\cB_k$. In case of ties, ensuring that the plan belongs
to $\cU(\a, \b)$ requires to solve a system of linear equations, as detailed in
Appendix \ref{proof:dual_primal_link}.
Unfortunately, the columns may fail to belong to $\cB_k$ in this situation.

\paragraph{Biconjugates and primal interpretation.}

As we discussed in \S\ref{sec:nonconvex_regul} and Proposition
\ref{prop:primal_interpretation},
the biconjugates $\Omega^{**}_+$ and $\Omega^{**}_{b}$
allow us to formally define what primal objective
the transportation plans obtained by~\eqref{eq:dual_primal_link}
and~\eqref{eq:semi_dual_primal_link} optimally solve when $\Omega$ is nonconvex.
Fortunately, for the case of $\Omega$ defined in~\eqref{eq:Omega_l2_l0},
we are able to derive an actual expression.
Let us define the squared $k$-support norm by
\begin{equation}
\Omega^{**}(\t) =
\Psi(\t) \coloneqq
\frac{1}{2} \min_{\blambda \in \RR^m} \sum_{i=1}^m \frac{t_i^2}{\lambda_i}
\quad \text{s.t.} \quad
\langle \blambda, \ones \rangle = k,
0 < \lambda_i \le 1 \quad \forall i \in [m].
\label{eq:sq_k_support_norm}
\end{equation}
The $k$-support norm is known to be the tightest convex relaxation
of the $\ell_0$ pseudo norm over the $\ell_2$ unit ball
and can be computed in $O(m \log m)$ time
\citep{argyriou_2012,mcdonald_2016}.
We then have the following proposition,
proved in Appendix~\ref{proof:biconjugates}.
\begin{proposition}{Biconjugate and primal interpretation}
\label{prop:biconjugates}

With $\Omega$ defined in~\eqref{eq:Omega_l2_l0},
we have
\begin{align}
\label{eq:biconjugate_nn}
&\Omega^{**}_+(\t) = \Psi(\t)
\text{ if } \t \in \RR^m_+,~
\Omega^{**}_+(\t) = \infty \text{ otherwise.} \\
\label{eq:biconjugate_simplex}
&\Omega^{**}_{b}(\t) = \Psi(\t)
\text{ if } \t \in b \triangle^m,~
\Omega^{**}_{b}(\t) = \infty \text{ otherwise.}
\end{align}
Therefore,
with $\Omega$ defined in~\eqref{eq:Omega_l2_l0},
we have
for all $\a \in \triangle^m$, $\b \in \triangle^n$ and
$C \in \RR^{m \times n}_+$
\begin{equation}
D_\Omega(\a, \b, C)
= S_\Omega(\a, \b, C)
= P_\Psi(\a, \b, C)
\le P_\Omega(\a, \b, C).
\end{equation}
The last inequality is an equality if there are no ties in
$\balpha^\star + \beta_j^\star \ones_m - \c_j$ 
or in
$\balpha^\star - \c_j ~\forall j \in [n]$.
\end{proposition}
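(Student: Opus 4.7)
The plan is to prove the two biconjugate identities~\eqref{eq:biconjugate_nn} and~\eqref{eq:biconjugate_simplex} separately, then feed them into Proposition~\ref{prop:primal_interpretation} to obtain the chain $D_\Omega = S_\Omega = P_\Psi \le P_\Omega$, and finally handle the no-ties case with a feasibility argument. The pointwise bound $\Psi \le \Omega$ (immediate, since $\Psi \le \frac{1}{2}\|\cdot\|_2^2$ always and $\Omega = +\infty$ outside $\cB_k$) will supply the last inequality.

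For~\eqref{eq:biconjugate_nn}, I would start from the closed form $\Omega^*_+(\s) = \frac{1}{2}\sum_{i=1}^{k} [s_{[i]}]_+^2 = \frac{1}{2}\|[\s]_+\|_{(k)}^2$ of Table~\ref{tab:expressions}, where $\|\cdot\|_{(k)}$ denotes the top-$k$ Euclidean norm, and compute $\Omega^{**}_+(\t) = \sup_\s \langle \s, \t\rangle - \Omega^*_+(\s)$ by cases. If $\t \not\in \RR^m_+$, sending $s_i \to -\infty$ along any coordinate with $t_i < 0$ blows up $\langle \s, \t\rangle$ without changing $\Omega^*_+$ (which ignores nonpositive entries), so $\Omega^{**}_+(\t) = +\infty$. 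If $\t \in \RR^m_+$, the substitution $\s \mapsto [\s]_+$ can only increase $\langle \s, \t\rangle$ while leaving $\Omega^*_+$ invariant, so one may restrict to $\s \ge 0$ and reduce to the conjugate of $\frac{1}{2}\|\cdot\|_{(k)}^2$; by squared-norm duality, this equals $\frac{1}{2}\|\t\|_k^{\mathrm{sp},2}$, where $\|\cdot\|_k^{\mathrm{sp}}$ is the dual of the $(k)$-norm, namely the $k$-support norm of Argyriou et al.\ and McDonald et al. The variational form~\eqref{eq:sq_k_support_norm} then identifies $\Psi(\t) = \frac{1}{2}\|\t\|_k^{\mathrm{sp},2}$.

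For~\eqref{eq:biconjugate_simplex}, rather than invert the top-$k$ sparsemax directly, I would use an envelope sandwich. On one hand, $\Psi + \delta_{b\triangle^m}$ is convex, lsc, and dominated by $\Omega_b$: on $b\triangle^m \cap \cB_k$ equality holds because $\|\t\|_0 \le k$ implies $\Psi(\t) = \frac{1}{2}\|\t\|_2^2$ (choose $\lambda_i = 1$ on $\mathrm{supp}(\t)$ in~\eqref{eq:sq_k_support_norm}), and on $b\triangle^m \setminus \cB_k$ we have $\Omega_b = +\infty$; hence $\Omega^{**}_b \le \Psi + \delta_{b\triangle^m}$. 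On the other hand, $\Omega_+ \le \Omega_b$ pointwise (since $b\triangle^m \subset \RR^m_+$) and monotonicity of the biconjugate gives $\Omega^{**}_b \ge \Omega^{**}_+ = \Psi$ on $b\triangle^m$, pinning down $\Omega^{**}_b = \Psi$ there. For $\t \notin b\triangle^m$, coordinate blow-ups handle $t_i < 0$ as in the orthant case, while the shift identity $\Omega^*_b(\s + c\ones) = \Omega^*_b(\s) + cb$ (from $\langle \ones, \t'\rangle = b$ on the simplex) forces $\Omega^{**}_b(\t) = +\infty$ whenever $\langle \ones, \t\rangle \ne b$ by sending $c \to \pm\infty$.

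Combining these identities with Proposition~\ref{prop:primal_interpretation} yields $D_\Omega = P_\Psi = S_\Omega$ after noting that $T \ge 0$ with column sums $\b$ is the same as $T \in \cU(\a, \b)$. For the equality case, when there are no ties, the unique plan $T^\star$ recovered from~\eqref{eq:k-sparse-nn}--\eqref{eq:k-sparse-simplex} has every column in $\cB_k$, so $\Psi(\t^\star_j) = \frac{1}{2}\|\t^\star_j\|_2^2 = \Omega(\t^\star_j)$ for all $j$; this exhibits $T^\star$ as a feasible candidate for $P_\Omega$ achieving the value $P_\Psi$, giving $P_\Omega \le P_\Psi$ and hence equality. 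The main obstacle is the orthant reduction together with the identification $(\frac{1}{2}\|\cdot\|_{(k)}^2)^* = \Psi$ in the second paragraph; once the $k$-support norm duality is in hand, the simplex case and the consequence follow by mostly formal manipulations.
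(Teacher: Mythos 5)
Your treatment of \eqref{eq:biconjugate_nn} is sound and takes a genuinely different route from the paper: you identify $\Omega^*_+$ with $\tfrac12\|[\cdot]_+\|_{(k)}^2$ and invoke the duality between the top-$k$ Euclidean norm and the $k$-support norm, whereas the paper first proves $\Psi^*(\s)=\tfrac12\sum_{i=1}^k|s|_{[i]}^2$ (Lemma~\ref{lemma:conjugate_squared_k_support_norm}) and then computes $\Omega^{**}_+$ by an explicit minimax exchange; the two are equivalent in content, and your restriction to $\s\ge 0$ is legitimate because replacing $\s$ by $|\s|$ leaves $\|\s\|_{(k)}$ unchanged while only increasing $\langle\s,\t\rangle$ when $\t\ge 0$. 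The concluding chain and the no-ties equality case also match the paper, modulo one sign slip: $\Psi\ge\tfrac12\|\cdot\|_2^2$ always (since $\lambda_i\le 1$ forces $t_i^2/\lambda_i\ge t_i^2$), not $\le$; the inequality $\Psi\le\Omega$ you need still holds because $\Psi=\tfrac12\|\cdot\|_2^2$ on $\cB_k$ and $\Omega=+\infty$ off it.

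The genuine gap is in \eqref{eq:biconjugate_simplex}: your ``sandwich'' produces two lower bounds and no upper bound. From ``$\Psi+\delta_{b\triangle^m}$ is closed, convex and dominated by $\Omega_b\coloneqq\Omega+\delta_{b\triangle^m}$'' the correct conclusion is $\Psi+\delta_{b\triangle^m}\le\Omega^{**}_b$ (the biconjugate is the \emph{largest} closed convex minorant), not $\Omega^{**}_b\le\Psi+\delta_{b\triangle^m}$; and your second half, $\Omega^{**}_b\ge\Omega^{**}_+=\Psi$ on $b\triangle^m$, is again a lower bound. The trivial upper bound $\Omega^{**}_b\le\Omega_b$ helps only on $b\triangle^m\cap\cB_k$, where $\Omega_b=\Psi$; on $b\triangle^m\setminus\cB_k$ one has $\Omega_b=+\infty$ and nothing follows. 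Proving $\Omega^{**}_b(\t)\le\Psi(\t)$ for $\t\in b\triangle^m$ with $\|\t\|_0>k$ is precisely the nontrivial content of the claim: it amounts to showing that the supremum defining $(\Psi+\delta_{b\triangle^m})^*$ is attained at a $k$-sparse point, or equivalently to computing $\Omega^{**}_b$ explicitly. The paper does the latter by writing $\Omega^*_b(\s)=\min_{\tau}\tfrac12\sum_{i=1}^k[s_{[i]}-\tau]_+^2+\tau b$ (Lemma~\ref{lemma:simplex}), re-expressing this through $\Psi^*$, and exchanging $\min$ and $\max$ in the resulting saddle point. Without this step your identification $S_\Omega=P_\Psi$ is incomplete, since Proposition~\ref{prop:primal_interpretation} expresses $S_\Omega$ through $\Omega^{**}_{b_j}$.
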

In other words, our dual and semi-dual approaches based on the
nonconvex $\Omega$
are equivalent to using the convex relaxation $\Psi$ as
regularization in the primal!
We believe that the biconjugate expressions in Proposition
\ref{prop:biconjugates} are of independent interest and
could be useful in other works.
For instance, it shows that top-$k$ sparsemax can be alternatively
viewed as an argmax regularized with $\Psi$.

\paragraph{Limit cases and smoothness.}

Let $T^\star$ be the solution of the quadratically-regularized OT
(without cardinality constraints).
If $k \ge \|\t^\star_j\|_0$ for all $j \in [n]$,
then the constraint $\|\t_j\|_0 \le k$ in~\eqref{eq:sparsity_constrained_ot}
is vacuous, and therefore our formulation recovers
the quadratically-regularized one.
Since $\Omega$ is strongly convex in this case,
both conjugates $\Omega^*_+$ and $\Omega^*_{b}$ are smooth (i.e., with
Lipschitz gradients), thanks to the duality between strong convexity and
smoothness \citep{hiriart_1993}.
On the other extreme, when $k=1$,
we have the following (a proof is given in Appendix~\ref{proof:limit_cases}).
\begin{proposition}{Limit cases}
\label{prop:limit_cases}

With $\Omega$ defined in~\eqref{eq:Omega_l2_l0} and $k=1$,
we have for all $\s \in \RR^m$
\begin{equation}
\Omega^*_+(\s) = \frac{1}{2\gamma} \max_{i \in [m]} [s_i]_+^2
\quad \text{and} \quad
\Omega^*_{b}(\s) = b \max_{i \in [m]} s_i - \frac{\gamma}{2} b^2.
\end{equation}
We then have for all $\a \in \triangle^m$, $\b \in \triangle^n$ and
$C \in \RR^{m \times n}_+$,
\begin{equation}
D_\Omega(\a, \b, C)
= S_\Omega(\a, \b, C)
= S_0(\a, \b, C) + \frac{\gamma}{2} \|\b\|^2_2
= P_0(\a, \b, C) + \frac{\gamma}{2} \|\b\|^2_2.
\end{equation}
\end{proposition}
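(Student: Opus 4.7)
The plan is to break the proposition into three steps: derive the two conjugate formulas explicitly from the definition when $k=1$, then chain them together to evaluate the semi-dual and dual.

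First I would compute $\Omega^*_{b}(\s)$. When $k=1$, the feasible set $b\triangle^m \cap \cB_1$ is exactly $\{b\e_i : i \in [m]\}$, a finite set of $m$ points. Plugging $\t = b\e_i$ into~\eqref{eq:l2max_topk} (with regularization strength $\gamma$ restored via $(\gamma f)^*(\s) = \gamma f^*(\s/\gamma)$, but here it is cleaner to keep $\gamma$ in the objective), each candidate evaluates to $b\, s_i - \tfrac{\gamma}{2} b^2$, so the maximum over $i$ gives the claimed closed form. Next, for $\Omega^*_+(\s)$ I would parametrize $\t \in \RR^m_+ \cap \cB_1$ as $\t = t\e_i$ with $t \geq 0$; the inner 1D maximization of $s_i t - \tfrac{\gamma}{2} t^2$ over $t\ge 0$ has value $[s_i]_+^2/(2\gamma)$, and maximizing over $i$ yields the claimed formula.

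Next I would establish $S_\Omega(\a,\b,C) = S_0(\a,\b,C) + \tfrac{\gamma}{2}\|\b\|_2^2$ by substituting the formula for $\Omega^*_{b_j}$ into~\eqref{eq:reg_semi_dual}: the additive constants $-\tfrac{\gamma}{2}b_j^2$ pull out of the maximization, and what remains is precisely the unregularized semi-dual $S_0$, whose dependence on $\balpha$ is exactly the $c$-transform expression $b_j \max_i (\alpha_i - c_{i,j})$. Strong duality for the unregularized LP (recall $\Omega=0$ is convex) then gives $S_0(\a,\b,C) = P_0(\a,\b,C)$, handling the last equality in the chain.

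The key step, which I expect to be the main obstacle, is showing $D_\Omega(\a,\b,C) = S_\Omega(\a,\b,C)$ (equality, not just the weak inequality of Proposition~\ref{prop:weak_duality}), because the dual involves the nontrivial expression $\tfrac{1}{2\gamma}\max_i [\alpha_i+\beta_j-c_{i,j}]_+^2$. My plan is to eliminate $\bbeta$ analytically by solving the inner maximization column by column. Fix $\balpha$ and let $u_j \coloneqq \max_{i\in[m]} (\alpha_i - c_{i,j})$; the $\beta_j$-subproblem is
\begin{equation}
\max_{\beta_j \in \RR}\ \beta_j b_j - \frac{1}{2\gamma}[u_j + \beta_j]_+^2.
\end{equation}
Setting $v \coloneqq u_j + \beta_j$, the first-order condition $b_j = [v]_+/\gamma$ has solution $v^\star = \gamma b_j \geq 0$ (since $b_j \geq 0$), giving optimal value $\tfrac{\gamma}{2}b_j^2 - u_j b_j$. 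Summing over $j$ and adding $\langle \balpha, \a\rangle$, the maximization over $\balpha$ reproduces $S_0(\a,\b,C) + \tfrac{\gamma}{2}\|\b\|_2^2$, matching $S_\Omega$. Combining the three steps yields the full chain of equalities asserted by the proposition.
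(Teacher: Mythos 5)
Your proposal is correct, and the first two steps (the two conjugate formulas and the evaluation of $S_\Omega$) coincide with the paper's proof: both enumerate the finite set $b\triangle^m \cap \cB_1 = b\{\e_1,\dots,\e_m\}$ for $\Omega^*_b$, reduce $\Omega^*_+$ to a one-dimensional concave maximization over each axis, and pull the constants $-\tfrac{\gamma}{2}b_j^2$ out of the semi-dual. Where you genuinely diverge is the key equality $D_\Omega = S_\Omega$: the paper writes out the dual objective and then simply invokes Proposition~\ref{prop:biconjugates} (which asserts $D_\Omega = S_\Omega = P_\Psi$ for all $k$, via the $k$-support-norm biconjugate machinery), whereas you eliminate $\bbeta$ analytically. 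Your elimination is valid: since $x \mapsto [x]_+^2$ is nondecreasing, $\max_i[\alpha_i+\beta_j-c_{i,j}]_+^2 = [u_j+\beta_j]_+^2$ with $u_j=\max_i(\alpha_i-c_{i,j})$ (a one-line justification worth stating explicitly), and the resulting concave one-dimensional problem in $\beta_j$ attains its maximum at $v^\star=\gamma b_j$ with value $\tfrac{\gamma}{2}b_j^2 - u_j b_j$, so summing over $j$ recovers exactly $S_0 + \tfrac{\gamma}{2}\|\b\|_2^2$. This buys a self-contained, elementary proof of the $k=1$ case that does not depend on the heavier Proposition~\ref{prop:biconjugates}; the paper's route is shorter here but imports a result whose proof is considerably more involved. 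The final equality $S_0 = P_0$ is standard LP strong duality, as you note, matching the paper's implicit use of the convex case from \S\ref{sec:convex_regul}.
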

When $m < n$, it is infeasible to satisfy both the
marginal and the $1$-sparsity constraints. Proposition~\ref{prop:limit_cases}
shows that our (semi) dual formulations reduce to unregularized OT in this
``degenerate'' case.
As illustrated in Figure~\ref{fig:link-k-with-smoothness},
the conjugates
$\Omega^*_+$ and $\Omega^*_{b}$
become increasingly smooth
as $k$ increases.
We therefore interpolate between unregularized OT ($k$ small enough)
and quadratically-regularized OT ($k$ large enough), with the dual and semi-dual
being increasingly smooth as $k$ increases.
\begin{figure}[t]
\centering
\includegraphics[width=0.78 \textwidth]{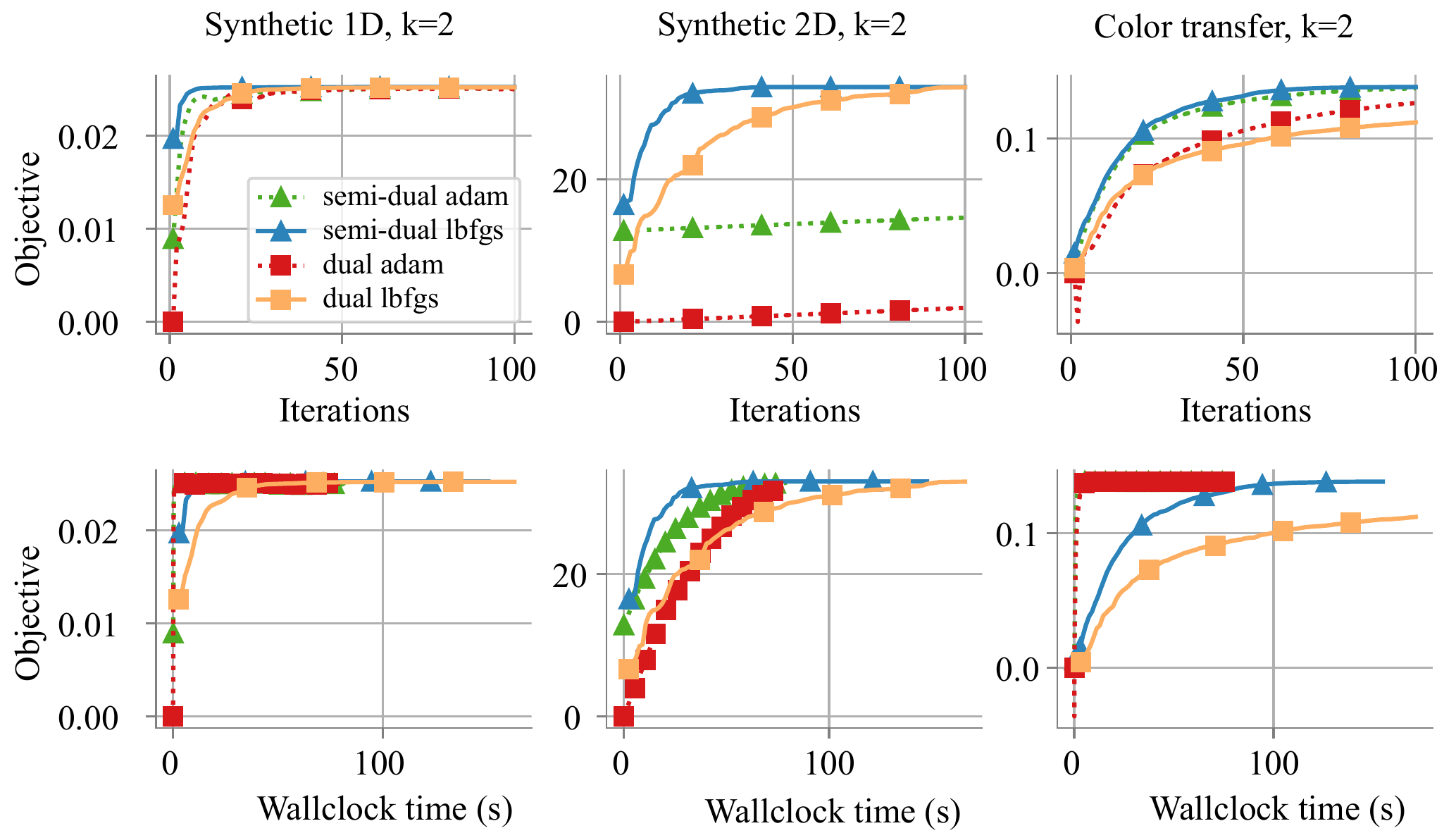}
\caption{Solver comparison for the semi-dual and dual formulation of
sparsity-constrained OT with $k=2$ on different datasets. Columns correspond to
datasets used in
Figure~\ref{fig:2d-comparison},
Figure~\ref{fig:1d-comparison},
and Figure~\ref{fig:color-transfer-k=2}.
}
\label{fig:solver-comparison}
\end{figure}

\vspace{-0.2cm}
\section{Experiments}
\label{sec:experiments}

\subsection{Solver and objective comparison}

We compared two solvers, LBFGS \citep{lbfgs} and ADAM \citep{ADAM} for
maximizing the dual and semi-dual objectives of our sparsity-constrained OT.
Results are provided in Figure~\ref{fig:solver-comparison}.  Compared to ADAM,
LBFGS is a more convenient option as it does not require the tuning of a
learning rate hyperparameter.  In addition, LBFGS empirically converges faster
than ADAM in the number of iterations (first row of
Figure~\ref{fig:solver-comparison}). That being said, when a proper learning
rate is chosen, we find that ADAM converges either as fast as or faster than
LBFGS in wallclock time (second row of Figure~\ref{fig:solver-comparison}).  In
addition, Figure~\ref{fig:solver-comparison} shows that dual and semi-dual
objectives are very close to each other toward the end of the optimization
process. This empirically confirms Proposition~\ref{eq:biconjugate_nn}, which
states that the dual and the semi-dual are equal at their optimum.


We have seen that a greater $k$ leads to a smoother objective landscape
(Figure~\ref{fig:link-k-with-smoothness}). It is known that a smoother objective
theoretically allows faster convergence. We validate this empirically in
Appendix~\ref{appendix:solver-speed-different-k}, where we see that a greater
$k$ leads to faster convergence.

\subsection{Sparse mixtures of experts}

We applied sparsity-constrained OT to vision sparse mixtures of
experts (V-MoE) models for large-scale image recognition
\citep{Riquelme2021scaling}. A V-MoE model replaces a few dense feedforward
layers $\mathsf{MLP}: \vx \in \RR^{\dimtoken} \mapsto \mathsf{MLP}(\vx) \in \RR^{\dimtoken}$ in
Vision Transformer (ViT) \citep{Dosovitskiy2021an} with the sparsely-gated
mixture-of-experts layers:
\begin{equation}
    \mathsf{MoE}(\boldsymbol{x}) \coloneqq \sum_{r=1}^{\nexpert}
     \gate_r(\boldsymbol{x}) \cdot \mlp_{r}(\boldsymbol{x}),
 \label{eq:moe-generic-single}
\end{equation}
where $\gate_r: \RR^\dimtoken \to \RR_{+}$ is a \textbf{sparse gating function}
and feedforward layers $\{\mlp_r\}_{r=1}^{n}$ are \textbf{experts}. In practice,
only those experts $\mlp_r(\cdot)$ corresponding to a nonzero gate value
$\gate_r(\vx)$ will be computed -- in this case, we say that the token $\vx$ is
\textbf{routed} to the expert $r$. Upon a minibatch of $\ntoken$ tokens
$\{\vx_1, \ldots, \vx_{\ntoken}\}$, we apply our sparsity-constrained OT to
match tokens with experts, so that the number of tokens routed to any expert
is bounded. Following \citet{Clark2022unified}, we backprop the gradient only
through the combining weights $\gate_r(\vx)$, but not through the
OT algorithm (details in Appendix~\ref{appendix:vmoe}),
as this strategy accelerates the backward pass of V-MoEs. Using this routing strategy,
we train the B/32 and B/16 variants of the V-MoE model: They refer to the ``Base" variants
of V-MoE with $32 \times 32$ patches and $16 \times 16$ patches, respectively.
Hyperparameters of these architectures are provided in \citet[Appendix~B.5]{Riquelme2021scaling}.
We train on the JFT-300M dataset \citep{Sun2017revisiting},
which is a large scale dataset that contains more than 305 million images.
We then perform 10-shot transfer learning on the ImageNet dataset
\citep{Deng2009imagenet}. Additional V-MoE and experimental details in Appendix~\ref{appendix:vmoe}.
Table~\ref{tab:vmoe-results} summarizes the validation accuracy on JFT-300M and
10-shot accuracy on ImageNet. Compared to baseline methods, our
sparsity-constrained approach yields the highest accuracy with both
architectures on both benchmarks.

\begin{table*}[t]
\small
\centering
\begin{tabular}{lcccc}\toprule
& \multicolumn{2}{c}{V-MoE B/32} & \multicolumn{2}{c}{V-MoE B/16} \\
\cmidrule(r){2-3}  \cmidrule(r){4-5}
& JFT prec@1 & ImageNet 10 shots & JFT prec@1 & ImageNet 10 shots   \\ \midrule
\addlinespace[0.3em]
TopK \citep{Riquelme2021scaling} & 43.47 & 65.91 & 48.86 & 72.12  \\
\addlinespace[0.3em]
\textsc{S-BASE} \citep{Clark2022unified} & 44.26 & 65.87 & 49.80  &  72.26 \\
\addlinespace[0.3em]
\textsc{Sparsity-constrained} (ours)  & \bf 44.30 & \bf 66.52 & \bf 50.29 & \bf
72.76 \\ \bottomrule
\end{tabular}
\vspace*{1mm}
\caption{\textbf{Performance of the V-MoE B/32 and B/16 architectures with
different routers.}
The fewshot
experiments are averaged over 5 different seeds
(Appendix~\ref{appendix:vmoe}).
\label{tab:vmoe-results}}
\end{table*}

\vspace{-0.2cm}
\section{Conclusion}

We presented a dual and semi-dual framework for OT with general nonconvex
regularization. We applied that framework to obtain a tractable lower bound to
approximately solve an OT formulation with cardinality constraints on the
columns of the transportation plan. We showed that this framework is formally
equivalent to using squared $k$-support norm regularization in the primal.
Moreover, it interpolates between unregularized OT (recovered when $k$ is small
enough) and
quadratically-regularized OT (recovered when $k$ is large enough).
The (semi) dual objectives were shown to be increasingly smooth as $k$
increases, enabling the use of gradient-based algorithms such as LBFGS or ADAM.
We illustrated our framework
on a variety of tasks; see \S\ref{sec:experiments} and Appendix
\ref{appendix:additional-experiments}.
For training of mixture-of-experts models in large-scale
computer vision tasks, we showed that a direct control of sparsity improves the
accuracy, compared to top-k and Sinkhorn baselines.
Beyond empirical performance, sparsity constraints may lead to more
interpretable transportation plans and the integer-valued hyper-parameter $k$
may be easier to tune than the real-valued parameter $\gamma$.

\newpage

\subsubsection*{Acknowledgments}

We thank Carlos Riquelme, Antoine Rolet and Vlad Niculae for feedback on a
draft of this paper, as well as Aidan Clark and Diego de Las Casas for
discussions on the Sinkhorn-Base router.  We are also grateful
to Basil Mustafa, Rodolphe Jenatton, Andr\'e Susano Pinto and Neil Houlsby
for feedback throughout the project regarding MoE experiments.
We thank Ryoma Sato for a fruitful email exchange regarding 
strong duality and ties.


\newpage

\appendix

\begin{center}
{\Huge Appendix}
\end{center}

\begin{table*}[h]
\small
\centering
\begin{tabular}{lcccc}\toprule
& Regularization & Transportation plan & Preferred algorithm  \\ \midrule
\addlinespace[0.3em]
Unregularized & None & Sparse &  Network flow \\
\addlinespace[0.3em]
\addlinespace[0.3em]
Entropy-regularized & Convex & Dense & Sinkhorn \\
\addlinespace[0.3em]
Quadratic-regularized & Convex & Sparse & LBFGS / ADAM  \\
\addlinespace[0.3em]
Sparsity-constrained & Non-Convex & Sparse \& cardinality-constrained  & LBFGS /
ADAM\\
\bottomrule
\end{tabular}
\caption{Overview of how our method compares to others. Our method can be thought as a middle ground between unregularized OT and quadratically-regularized OT. \label{table:overview-ot-formulations}}
\end{table*}

\section{Experimental details and additional experiments \label{appendix:additional-experiments}}

\subsection{Illustrations}

\paragraph{OT between 2D points.}

In Figure~\ref{fig:2d-comparison}, we visualize the transportation plans between
2D points. These transportation plans are obtained based on different
formulations of optimal transport, whose properties we recall in
Table~\ref{table:overview-ot-formulations}. The details of this experiment are
as follows. We draw 20 samples from a Gaussian distribution
$\mathcal{N}(\begin{bsmallmatrix}0\\0\end{bsmallmatrix},
\begin{bsmallmatrix}1&0\\0&1\end{bsmallmatrix})$ as source points; we draw 20
samples from a different Gaussian distribution
$\mathcal{N}(\begin{bsmallmatrix}4\\4\end{bsmallmatrix},
\begin{bsmallmatrix}1&-0.8\\-0.6&1\end{bsmallmatrix})$ as target points.  The
cost matrix in $\mathbb{R}^{20 \times 20}$ contains the Euclidean distances
between source and target points. The source and target marginals $\a$ and $\b$
are both probability vectors filled with values $1/20$. On the top row of
Figure~\ref{fig:2d-comparison}, blue lines linking source points and target
points indicate nonzero values in the transportation plan obtained from each
optimal transport formulation. These transportation plans are shown in the
second row. Figure~\ref{fig:2d-comparison} confirms that, by varying $k$ in our
sparsity-constrained formulation, we control the columnwise sparsity of the
transportation plan.

\begin{figure}[ht]
\includegraphics[width=0.99 \textwidth]{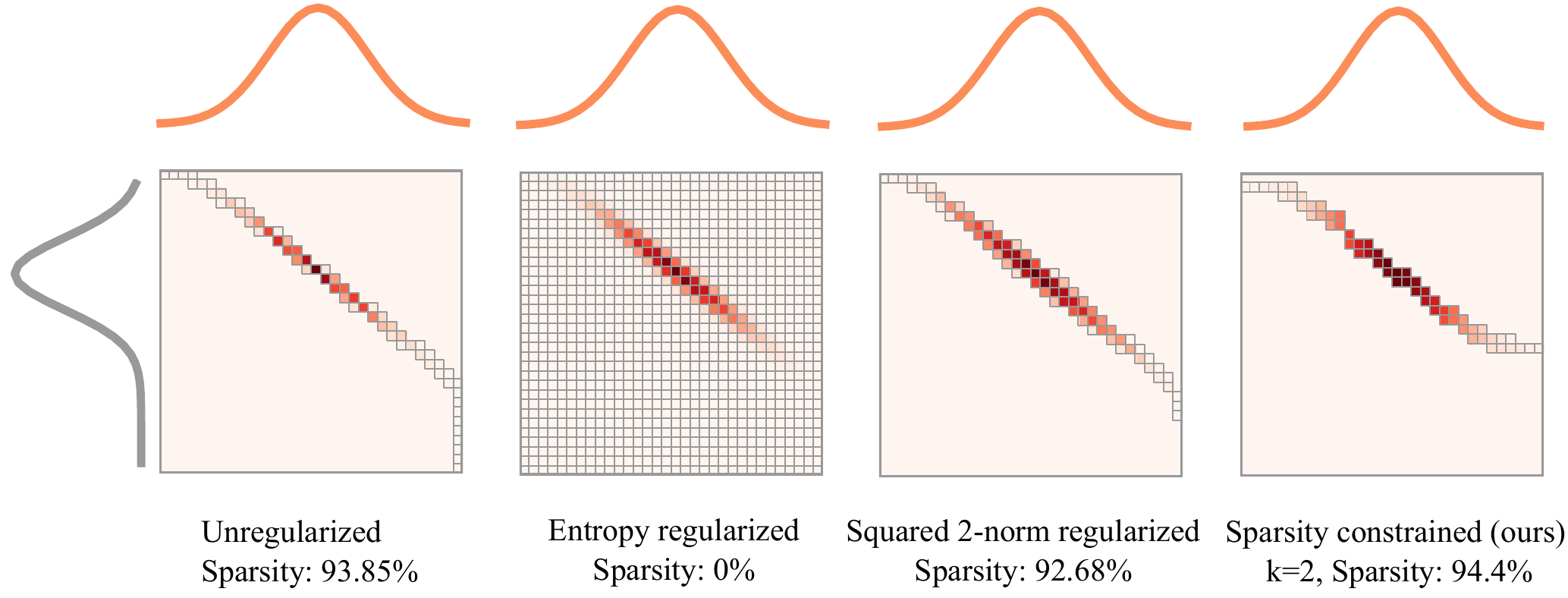}
\caption{OT between two Gaussian distributions.}
\label{fig:1d-comparison-between-Gaussians}
\end{figure}

\paragraph{OT between two Gaussians.}

In Figure~\ref{fig:1d-comparison-between-Gaussians} above, we show
transportation plans between two Gaussian distributions. The concrete set up of
this experiment is as follows. We let $Y$ and $Z$ be categorical random
variables taking values in $\{0, \ldots, 31\}$. The realizations of $Y$ and $Z$
are the source and target locations on a 1D grid, respectively. Let $\phi(z; m,
s) \coloneqq \exp \big ( {\frac{-(z - m)^2}{ 2 s^2}} \big)$,
where $z, m, s$ are all real scalars with $s \neq 0$. The source
distribution is set to be $\mathbb{P}(Y = y) \coloneqq \phi(y; 10, 4)\slash c_Y$
with a normalizing constant $c_Y \coloneqq \sum_{y = 0}^{31} \phi(y; 10, 4)$;
the target distribution is set to be $\mathbb{P}(Z = z) \coloneqq \phi(z; 16,
5)\slash c_Z$ with a normalizing constant $c_Z \coloneqq \sum_{z = 0}^{31}
\phi(z; 16, 5)$. The cost matrix $C$ contains normalized squared Euclidean
distances between source and target locations: $C_{ij} = (i -j)^2 \slash 31^2
\in [0, 1]$. By setting $k=2$ in our sparsity-constrained OT
formulation, we obtain a transportation plan that contains at most two nonzeros
per column (right-most panel of
Figure~\ref{fig:1d-comparison-between-Gaussians}).

\paragraph{OT between Gaussian and bi-Gaussian.}

Similar to Figure~\ref{fig:1d-comparison-between-Gaussians}, we show
transportation plans between a Gaussian source marginal and a
mixture of two Gaussians target marginal in Figure~\ref{fig:1d-comparison}.
We set the source distribution as $\mathbb{P}(Y = y) \coloneqq \phi(y; 16,
5)\slash c_Y,$ where $c_Y \coloneqq \sum_{y = 0}^{31} \phi(y; 16, 5)$ is the
normalizing constant; we set the target distribution as $\mathbb{P}(Z = z)
\coloneqq \big (\phi(z; 8, 5) + \phi(z; 24, 5) \big) \slash c_Z$, where $c_Z =
\sum_{z = 0}^{31} \big( \phi(z; 8, 5) + \phi(z; 24, 5) \big)$ is the normalizing
constant. Apart from that, we use the same settings as
Figure~\ref{fig:1d-comparison-between-Gaussians},

\begin{figure}[ht]
\includegraphics[width=0.99 \textwidth]{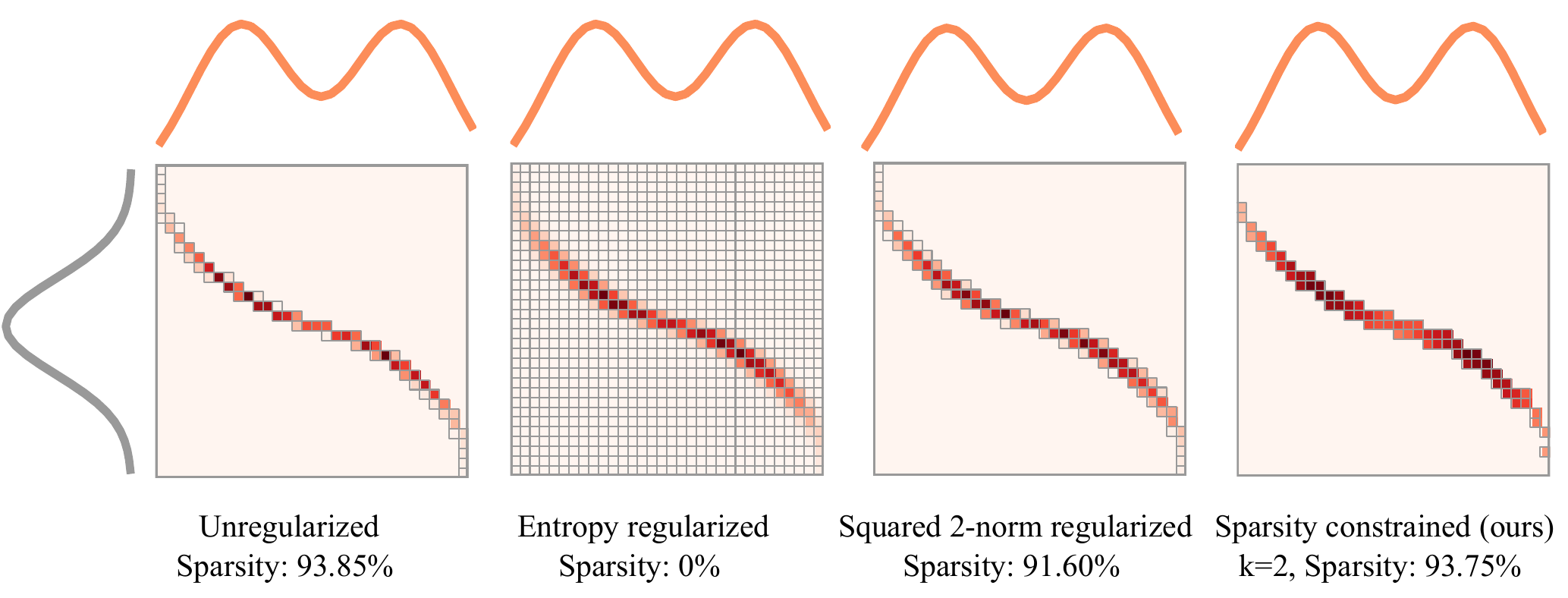}
\caption{OT between Gaussian and bi-Gaussian distributions.}
\label{fig:1d-comparison}
\end{figure}

\subsection{Solver comparison with an increased cardinality \label{appendix:solver-speed-different-k}}

We have seen that an increased $k$ increases the smoothness of the optimization
problem (Figure~\ref{fig:link-k-with-smoothness}). This suggests that solvers
may converge faster with an increased $k$. We show this empirically in
Figure~\ref{fig:grad_norm_comparison}, where we measure the gradient norm at
each iteration of the solver and compare the case $k=2$ and $k=4$.

\begin{figure}[ht]
\includegraphics[width=0.99 \textwidth]{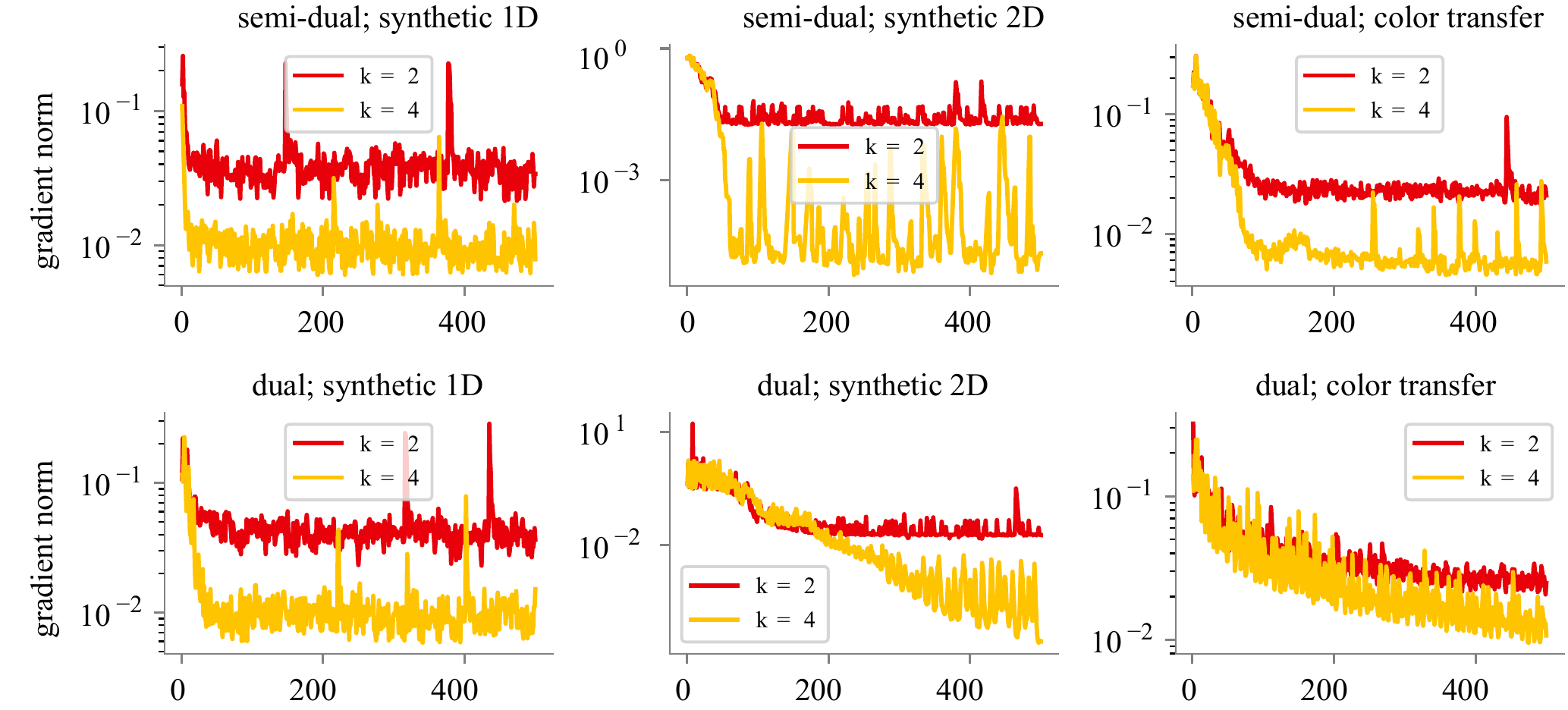}
\caption{ \textbf{Solvers converge faster with an increased $k$.} We measure the
gradient norm at each iteration of LBFGS applied to the semi-dual formulations
(top row) and the dual formulations (bottom row) of different datasets.
Since the gradient norm should go to zero, we see that
LBFGS solver converges faster with an increased $k$.
\label{fig:grad_norm_comparison}}
\end{figure}

\subsection{Color transfer}

We apply our sparsity-constrained formulation on the classical OT application of
color transfer \citep{Pitie2007automated}. We follow exactly the same
experimental setup as in \citet[Section 6]{blondel_2018}.
Figure~\ref{fig:color-transfer-k=2} shows the results obtained
from our sparsity-constrained approach. Similar to well-studied alternatives,
our yields visually pleasing results.

\begin{figure}[ht]
\includegraphics[width=0.99 \textwidth]{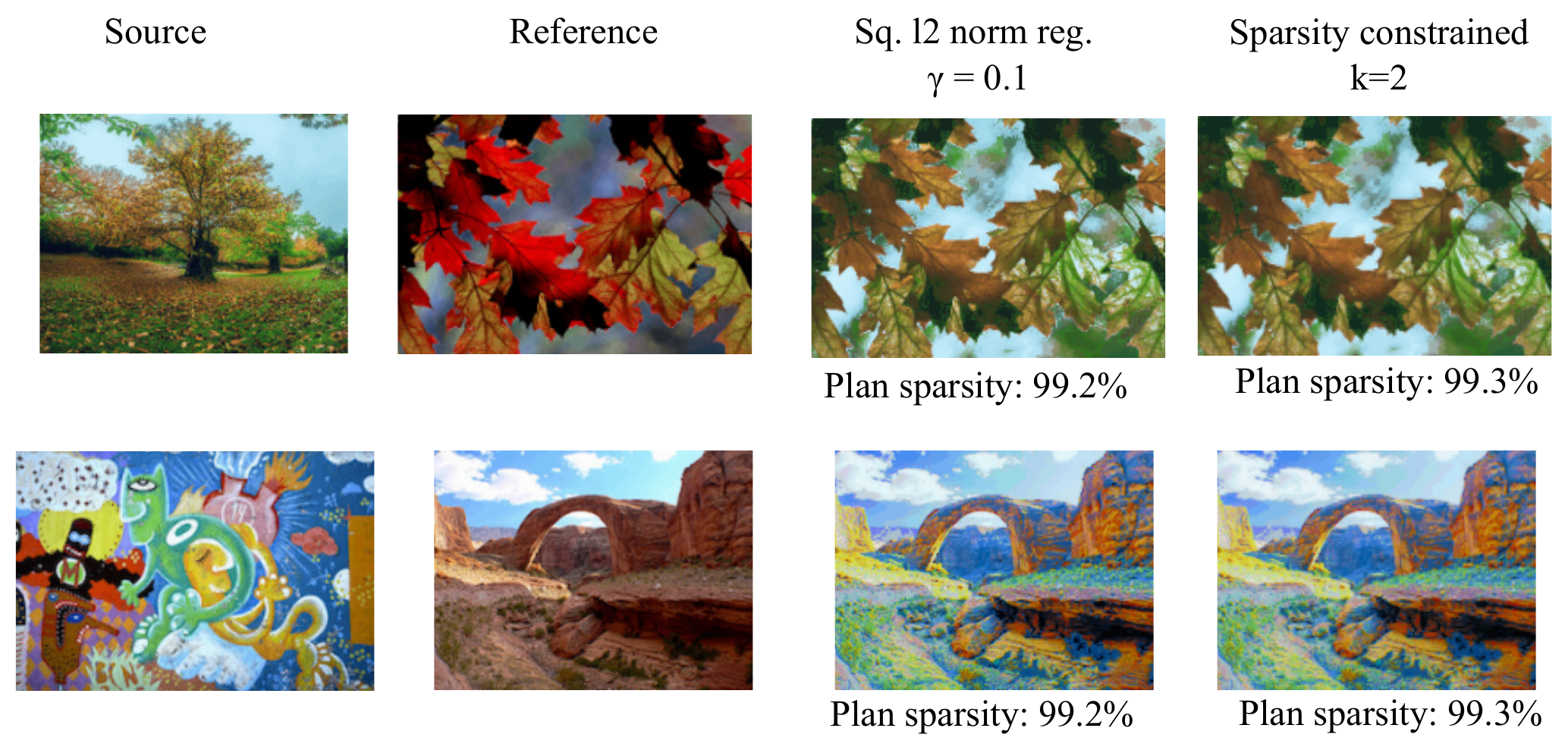}
\caption{\textbf{Result comparison on the color transfer task}. The sparsity
indicated below each image shows the percentage of nonzeros in the
transportation plan. For a fair comparison, we use $k=2$ for the
sparsity-constrained formulation and the regularization weight $\gamma = 0.1$
for squared $\ell_2$ formulation to produce comparably sparse transportation
plan. \label{fig:color-transfer-k=2}}
\end{figure}

\subsection{Supply-demand transportation on spherical data}

We follow \citet{Amos2022meta} to set up a synthetic transport problem between
100 supply locations and 10,000 demand locations worldwide. Transport costs
are set to be the spherical distance between the demand and supply locations.
This transportation problem can be solved via the entropy regularized optimal
transport as in \citet{Amos2022meta}. We visualized this entropy-regularized
transport plan in panel~\textbf{(a)} of Figure~\ref{fig:supply-demand}.

Building upon the setting in \citet{Amos2022meta}, we additionally assume that
each supplier has a \textbf{limited supplying capacity}. That is, each supplier
can transport goods to as many locations as possible up to a certain prescribed
limit. This constraint is conceivable, for instance, when suppliers operate with
a limited workforce and cannot meet all requested orders. We incorporate this
constraint into our formulation of sparsity-constrained optimal transport by
specifying $k$ as the capacity limit. The panel~\textbf{(b)} of
Figure~\ref{fig:supply-demand} is the obtained transportation plan with a
supplying capacity of $k=100$ (each supplier can transport goods to at most 100
demand locations).

Comparing panels~\textbf{(a)} and \textbf{(b)} of Figure~\ref{fig:supply-demand},
we recognize that derived plans are visibly different in a few ways. For
instance, with the capacity constraint on suppliers, demand
locations in Europe import goods from more supply locations in North America
than without the capacity constraint. Similar observations
go to demand locations in pacific islands: Without the capacity constraint,
demand locations in Pacific islands mostly rely on suppliers in North America;
with the capacity constraint, additional suppliers in
South America are in use.

\begin{figure}
\begin{center}
\textbf{(a) }Entropy-regularized transportation plan
\end{center}
\includegraphics[width=0.97 \textwidth]{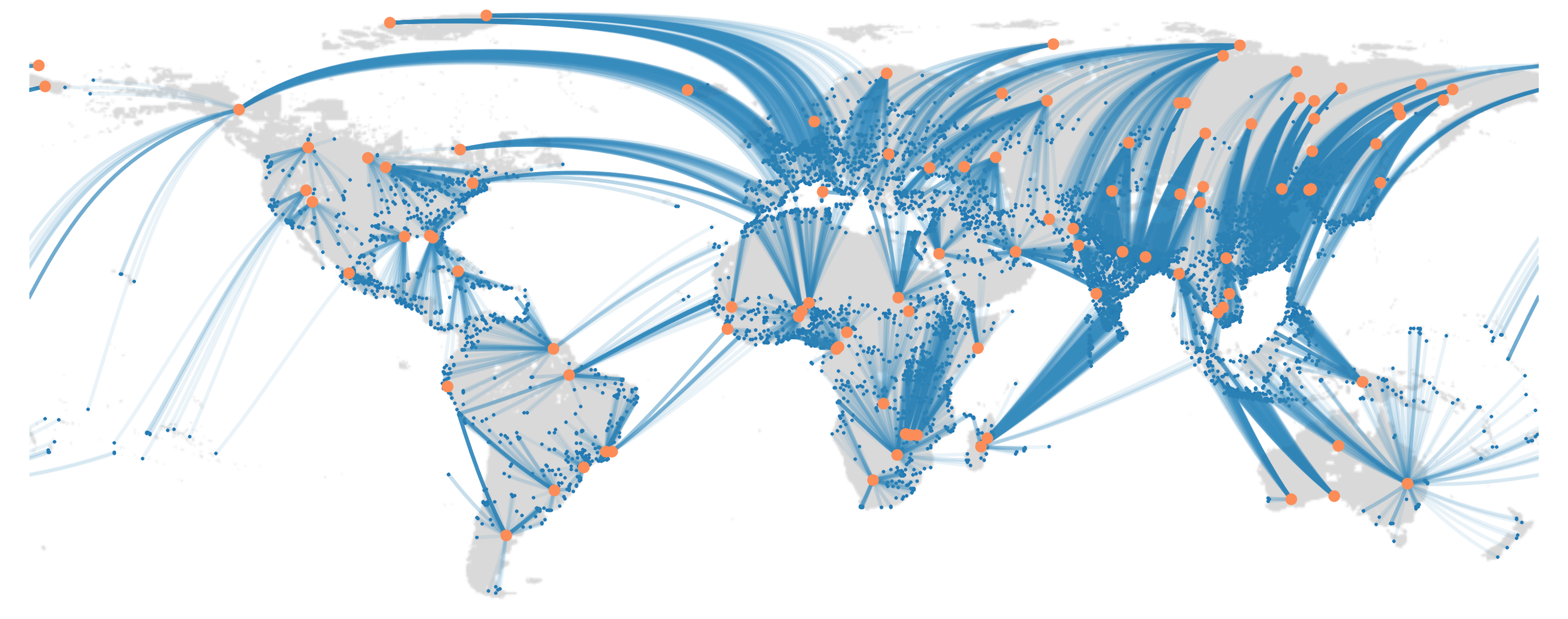}
\begin{center}
\textbf{(b) }Sparsity-constrained transportation plan
\end{center}
\includegraphics[width=0.97 \textwidth]{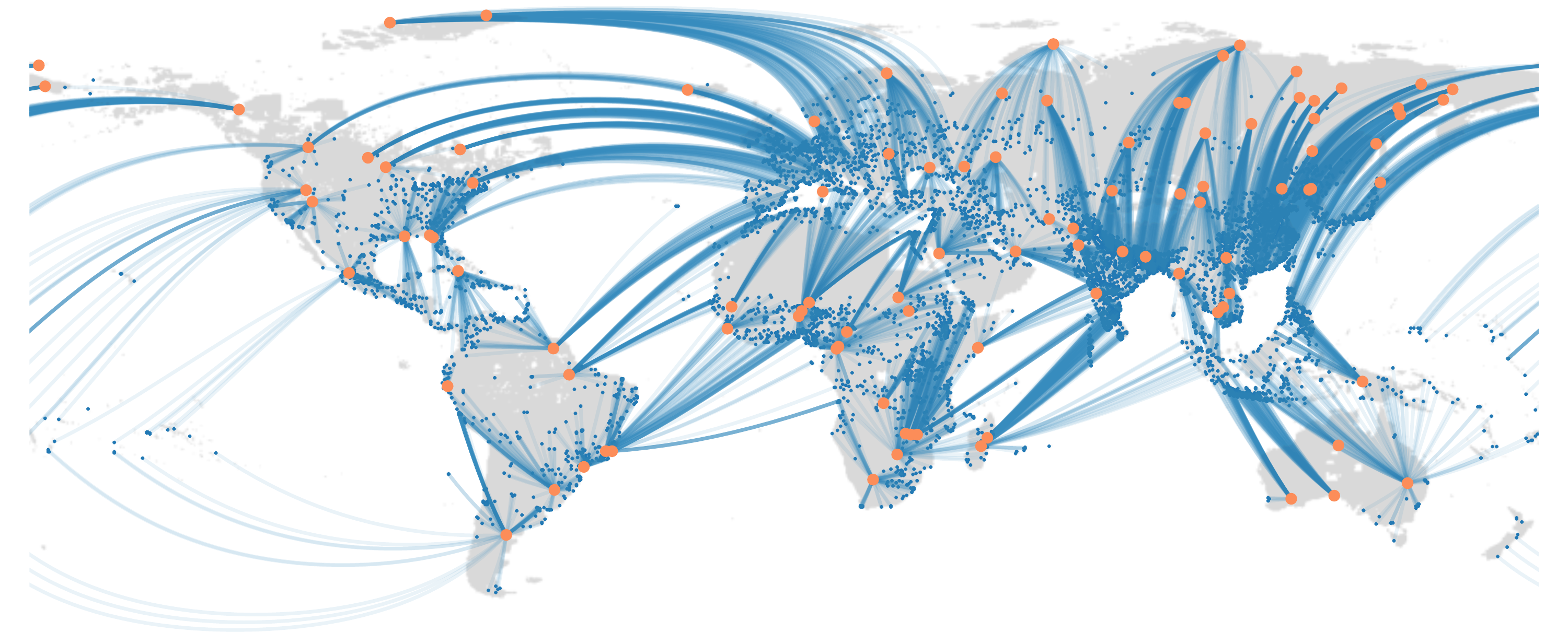}
\caption{\textbf{Plans obtained from the supply-demand transportation task.}
Blue lines show the transportation plan from the supply locations (yellow dots)
to demand locations (blue dots). The top panel shows the transportation plan
without a capacity limit on supply: Each supplying location can transport to as
many demand location as possible. This is derived based on the
entropy-regularized optimal transportation. The bottom panel shows the
transportation plan with a capacity limit on supply: Each supplying location can
meet demands up to a fixed capacity. The plan in this case is derived by
sparsity-constrained optimal transport with $k = 100$.
\label{fig:supply-demand}}
\end{figure}

\newpage
\subsection{V-MOE experiment \label{appendix:vmoe}}

Our experiment is based on the vision MoE (V-MoE) architecture
\citep{Riquelme2021scaling}, which replaces a few MLP layers of the vision
Transformer \citep{Dosovitskiy2021an} by MoE layers. In this subsection, we
review the background of V-MoE, with a focus on the \textbf{router},
which decides which experts get which input tokens.

We introduce a few notations that will be used throughout this subsection. Let
$\{\vx_1, \ldots, \vx_{\ntoken} \} \subset \RR^{\dimtoken}$ be a minibatch of
$\ntoken$ tokens in $\RR^\dimtoken$, and let $X \in \RR^{\ntoken \times
\dimtoken}$ be a corresponding matrix whose rows are tokens. Let $W \in
\RR^{\dimtoken \times \nexpert}$ be a learnable matrix of \textbf{expert
weights}, where each column of $W$ is a learnable feature vector of an expert.
Common to different routing mechanisms is an \textbf{token-expert affinity
matrix} $\mPi \coloneqq XW \in \RR^{\ntoken \times \nexpert}$: Its $(i,j)$-th entry is an
inner-product similarity score between the $i$-th token and the $j$-th expert.


\paragraph{The TopK router.}

To route tokens to experts, the \textbf{TopK router} in
\citet{Riquelme2021scaling} computes a sparse gating matrix $\mGamma$ that has
at most $\kappa$ nonzeros per row, through a function $\topK_{\kappa}:
\RR^{\nexpert} \to \RR^{\nexpert}$ that sets all but largest $\kappa$ values
zero:
\begin{equation} \label{eq:vanilla-vmoe-gating}
\Gamma \coloneqq \topK_{\kappa} \big ( \softmax~(\mPi+ \sigma \vepsilon)  \big ) \in \RR^{\ntoken \times \nexpert} \quad \text{~with~} \quad \mPi = X \mW.
\end{equation}
Note that the integer $\kappa$ is not to be confused with $k$ used in the main text -- $\kappa$ here refers to the number of selected expert for each token and it can differ from the cardinality-constraint $k$ used in the main text in general. The vector $\vepsilon \sim \mathcal{N}(0, I)$ in
\eqref{eq:vanilla-vmoe-gating} is a noise injected to the token-expert affinity
matrix $X W$ with $\sigma \in \RR$ controlling the strength of noise. In
practice, $\sigma$ is set to be $1 \slash \nexpert$ during training and $0$ in
inference. To ensure that all experts are sufficiently trained, the gating
matrix $\Gamma$ in~\eqref{eq:vanilla-vmoe-gating} is regularized by auxiliary
losses that encourage experts to taken a similar amount of tokens in a
minibatch. A detailed description of these auxiliary losses is presented in
\citet[Section A]{Riquelme2021scaling}.

For an efficient hardware utilization, \citet{Riquelme2021scaling} allocate a
\textbf{buffer capacity} of experts, which specifies the number of tokens each
expert can \textbf{at most} process in a minibatch. With a specified buffer
capacity and a computed gating matrix $\Gamma$, the TopK router goes over the
rows of $\Gamma$ and assign each token to its top-chosen expert as long as the
chosen expert's capacity is not full. This procedure is described in Algorithm~1
of \citet[Section C.1]{Riquelme2021scaling}. Finally, the outcomes of experts
are linearly combined using the gating matrix $\Gamma$ as in
\eqref{eq:moe-generic-single}.

\paragraph{The \textsc{S-BASE} router.}

\citet{Clark2022unified} cast the token-expert matching problem as an
entropy-regularized OT problem, solved using the Sinkhorn algorithm. This
approach, dubbed as the Sinkhorn-\textsc{BASE} (\textsc{S-BASE}) router, was
originally designed for language MoEs that take text as input. In this work, we
adapt it to vision MoEs. In direct parallel to the TopK gating matrix in
\eqref{eq:vanilla-vmoe-gating}, the gating matrix of entropy-regularized OT
is set to be

\begin{equation} \label{eq:entropy-regularized-gating}
\mGamma_{\textrm{ent}} \coloneqq \topK_{\kappa} \big ( \mPi_\textrm{ent} \big ) \in \RR^{\ntoken \times \nexpert},
\end{equation}
where
\begin{equation} \label{eq:sinkhorn-objective}
\mPi_{\textrm{ent}} \coloneqq \argmin_{\substack{T \in \cU(\a, \b)}}
\langle T, -\mPi \rangle + \innerprod{T, \log T}, \quad \text{with~} \a = \vone_{\nexpert} \text{~and~} \b=(\ntoken/ \nexpert) \vone_{\nexpert}.
\end{equation}

The optimization plan $\mPi_{\textrm{ent}}$ in~\eqref{eq:sinkhorn-objective} can
be obtained using the Sinkhorn algorithm \citep{sinkhorn}. Note that while we  formulated optimal transport problems with non-negative cost matrices in the main text, values in the cost matrix $C=-\mPi$ in \eqref{eq:sinkhorn-objective} can be both positive and negative, following \citet{Clark2022unified}.
Since $\mPi_{\textrm{ent}}$ is a dense matrix, a heuristic is needed to select only $\kappa$ experts to form the gating matrix $\mGamma_{\textrm{ent}}$ -- this is achieved by using a $\topK_{\kappa}$ in~\eqref{eq:entropy-regularized-gating}. With a computed
gating matrix $\mGamma_{\textrm{ent}}$, the \textsc{S-BASE} router assigns each
token to its top-chosen expert in the same way of the TopK router. This process
allocates each expert an amount of tokens, up to a certain upper bound specified
by the buffer capacity as in the case of TopK. As in
\citet{Clark2022unified}, we linearly combine the output of experts using a
softmax matrix $\softmax(\mPi)$. In this way, the backward pass of gradient-based
training does not go through the Sinkhorn algorithm, can be faster and more
numerically stable\footnote{Personal communications with the authors of
\citet{Clark2022unified}.}.

\paragraph{The Sparsity-constrained router.}

We cast the token-expert matching problem as a sparsity-constrained OT problem.
With a prescribed buffer capacity $k$, our goal is to upper-bound
the number of tokens assigned to each expert by $k$. This amounts
to adding a cardinality constraint to each column of the gating matrix:
\begin{equation}
\Gamma_{\textrm{sparse}} \coloneqq \argmin_{\substack{T \in \cU(\a, \b)\\ T \in \cB_{k} \times \dots \times \cB_{k}}}
\langle T, -\mPi_{\textrm{softmax}} \rangle + \frac{1}{2} \|T\|_2^2, \quad \text{with~} \a = \vone_{\nexpert} \text{~and~} \b=(\ntoken/ \nexpert) \vone_{\nexpert}
\label{eq:col-k-sparse}
\end{equation}
with $\mPi_{\textrm{softmax}} = \softmax(XW)$. The purpose of the softmax function here is to obtain a cost matrix containing values of the same sign.
Otherwise, if a cost matrix contains both positive and negative values, then the obtained plan from sparsity-constrained optimal transport may contain zero at all entries corresponding to positive values in the cost matrix, so as to minimize to the objective.
In that case, columns of this transportation may contain much fewer nonzeros than $k$ -- this is an undesirable situation as it under-uses the buffer capacity of experts.
Note that, however, this was not an issue in the \textsc{S-BASE} router -- a cost matrix there can contain both positive and negative values \citep{Clark2022unified} -- because values of the transportation plan yielded by the Sinkhorn's algorithm are strictly positive.

The sparse transportation plan $\Gamma_{\textrm{sparse}}$ in \eqref{eq:col-k-sparse},
allocates each expert an amount of tokens up to $k$. As in the
\textsc{S-BASE} router, we linearly combine the output of experts using
the matrix $\mPi_{\textrm{softmax}}$.

To approximate $\Gamma_{\textrm{sparse}}$, we optimize its semi-dual proxy as
introduced in Section~\ref{sec:nonconvex_regul}. We do so by using an ADAM
optimizer with a learning rate of $10^{-2}$ for 50 steps.

\paragraph{V-MoE architecture.}

We use the \textsc{S-BASE} router and our proposed sparsity-constrained router as
drop-in replacements of the TopK router in otherwise standard V-MoE
architectures \citep{Riquelme2021scaling}. We focus on the V-MoE B/32 and B/16
architectures, which use $32 \times 32$ and $16 \times 16$ patches,
respectively. We place MoEs on every other layer, which is the Every-2 variant
in \citet{Riquelme2021scaling}. We fix the total number of experts $n= 32$ for
all experiments. In the TopK and \textsc{S-BASE} router, we assign 2 experts to
each expert, that is, $\kappa=2$ in~\eqref{eq:vanilla-vmoe-gating} and
\eqref{eq:entropy-regularized-gating}. The buffer capacity is set to be $n/\kappa = 32/2 = 16$, that is, each expert can take $16$ tokens at most. To
match this setting, we use $k = 16$ in~\eqref{eq:col-k-sparse} for our
sparsity-constrained router.

\paragraph{Upstream training and evaluation.}

We follow the same training strategy of \citet{Riquelme2021scaling} to train
B/32 and B/16 models on JFT-300M, with hyperparameters reported in \citet[Table
8]{Riquelme2021scaling}. JFT-300M has around 305M training and 50,000 validation
images. Since labels of the JFT-300M are organized in a hierarchical way, an
image may associate with multiple labels. We report the model performance by
precision@1 by checking if the predicted class with the highest probability is
one of the true labels of the image.

\paragraph{Downstream transfer to ImageNet.}

For downstream evaluations, we perform 10-shot linear transfer on ImageNet
\citep{Deng2009imagenet}. Specifically, with a JFT-trained V-MoE model, we
freeze the model up to its penultimate layer, re-initialize its last layer, and
train the last layer on ImageNet. This newly initialized layer is trained on 10
examples per ImageNet class (10-shot learning).

\paragraph{Comparing the speed of routers.}

\begin{table*}[h]
\normalsize
\centering
\begin{tabular}{lcc}
\toprule
& B/32 & B/16 \\ \midrule
TopK  \citep{Riquelme2021scaling}     &      97.11 &     308.14\\
\addlinespace[0.3em]
\textsc{S-BASE}  \citep{Clark2022unified}      &      98.88 &     312.52\\
\addlinespace[0.3em]
Sparsity-constrained  (ours) &     122.66 &     433.56\\
\bottomrule
\end{tabular}
\caption{Total Training TPUv2-core-days \label{label:tpu-time}}
\end{table*}

We note that the sparsity-constrained router is slightly slower than
baseline routers.
One reason is that the $\topk$ function used for $k$-sparse projection steps.
To further speedup the
sparsity-constrained router, an option is to use the approximated version of
$\topk$ \citep{Chern2022tpu}, which we did not use in this study. This
approximated $\topk$ may be especially useful on large models like B/16, where
the number of tokens is large. Another way to accelerate the
sparsity-constrained router is to explore different optimizers. Currently, we
run the ADAM optimizer for 50 steps using a learning rate $10^{-2}$. We suspect
that with a more careful tuning of the optimizer, one can reduce the number of
steps without harming the performance. Variants of accelerated gradient-based methods \citep{An2022efficient} may also be applicable.

\subsection{Soft balanced clustering}

\paragraph{OT viewpoint.}

Suppose we want to cluster $m$ data points $\x_1, ..., \x_m \in \RR^d$
into $n$ clusters with centroids $\bmu_1, \dots, \bmu_n \in \RR^d$.
We let $X \in \RR^{d \times m}$ be a matrix that contains data points
$\x_1, ..., \x_m$ as columns. Similarly, we let $\mu \in \RR^{d \times n}$
be a matrix of centroids.

The K-Means algorithm can be viewed as an OT
problem with only one marginal constraint,
\begin{equation}
\min_{\substack{
T \in \RR^{m \times n}_+\\
T\ones_{n} = \a \\
\mu \in \RR^{d \times n}}}
\sum_{i=1}^m \sum_{j=1}^n t_{i,j} \|\x_i - \bmu_j\|^2_2
=
\min_{\substack{
T \in \RR_+^{m \times n}\\
T \ones_n = \a\\
\mu \in \RR^{d \times n}}}
\langle T, C \rangle,
\end{equation}
where $[C]_{i,j} = \|\x_i - \bmu_j\|^2_2$
and $\a = \ones_m / m$.
Lloyd's algorithm corresponds to alternating
minimization w.r.t. $T$ (updating centroid memberships) and w.r.t. $\mu$
(updating centroid positions).

This viewpoint suggests two generalizations.
The first one consists in using two marginal constraints
\begin{equation}
\min_{\substack{
T \in \RR_+^{m \times n}\\
T \ones_n = \a\\
T^\top \ones_m = \b\\
\mu \in \RR^{d \times n}}}
\langle T, C \rangle
=
\min_{\substack{T \in \cU(\a, \b)\\
\mu \in \RR^{d \times n}
}} \langle T, C \rangle.
\end{equation}
This is useful in certain applications to impose a prescribed size to each
cluster (e.g., $\b = \ones_n / n$)
and is sometimes known as balanced or constrained K-Means \citep{ng_2000}.

The second generalization consists in introducing convex regularization $\Omega$
\begin{equation}
\min_{\substack{T \in \cU(\a, \b)\\
\mu \in \RR^{d \times n}
}} \langle T, C \rangle
+ \sum_{j=1}^n \Omega(\t_j).
\end{equation}
This moves the optimal plan away from the vertices of the polytope.
This corresponds to a ``soft'' balanced K-Means,
in which we replace ``hard'' cluster memberships with ``soft'' ones.
We can again alternate between minimization w.r.t.\ $T$ (solving a
regularized OT problem) and minimization w.r.t.\ $\mu$. In the case of the
squared Euclidean distance, the closed form solution for the latter is
$\bmu_i \propto \sum_{j=1}^n t_{i,j} \x_j$ for all $i \in [m]$.

When $\Omega$ is nonconvex, we propose to solve the (semi) dual as
discussed in the main text.

\begin{table}[t]
\begin{tabular}{lcc}
\toprule
        &  E-Step & M-Step\\
\midrule
K-Means
& $t_{i,:} \gets \e_{\argmin_j [C^{(s)}]_{i,j}} $
& \multirow{5}{*}{$\mu^{(s + 1)} \gets (X T) \oslash (\ones_{d \times m} T)$} \\
\addlinespace[0.3em]
Soft K-Means
& $t_{i,:} \gets \text{softmax}\big(-[C^{(s)}]_{i,:}\big)$
&  \\
\addlinespace[0.3em]
Negentropy
& Solve~\eqref{eq:reg_primal} with
$\Omega(\t) = \langle \t, \log \t \rangle$
&  \\
\addlinespace[0.3em]
Squared 2-norm
& Solve~\eqref{eq:reg_primal} with
$\Omega(\t) = \frac{1}{2} \|\t\|_2^2$
& \\
\addlinespace[0.3em]
Sparsity-constrained
& Solve~\eqref{eq:reg_dual},~\eqref{eq:reg_semi_dual} with
$\Omega(\t) = \frac{1}{2} \|\t\|_2^2 + \delta_{\mathcal{B}_k}(\t)$
& \\
\bottomrule
\end{tabular}
\caption{
Steps of the EM-like algorithm used to estimate the centers of the clusters
with each method. In all cases, the cost matrix $C^{(s)}$ is the squared distance
between each data point and the current estimate of the centers, at a given
step $s$, i.e. $[C^{(s)}]_{i,j} = \|\x_i - \bmu_j^{(s)}\|^2_2$. The vector
$\e_l \in \mathbb{R}^n$ is a canonical basis vector with $l$-th entry being 1 and all other entries being 0.
\label{tab:em_steps}}
\end{table}

\paragraph{Results on MNIST.}

MNIST contains grayscale images of handwritten digits, with a resolution of
$28 \times 28$ pixels. The dataset is split in 60\,000 training and 10\,000 test
images. As preprocessing, we simply put the pixel values in the range $[-1,1]$
and ``flatten'' the images to obtain vectors of 784 elements.

We use the training set to estimate the centers of the clusters using different
algorithms. We use an EM-like algorithm to estimate the cluster centers in all
cases, as described in Table~\ref{tab:em_steps} (we perform 50 update steps).
In particular, notice that only the E-step changes across different algorithms,
as described in Table~\ref{tab:em_steps}. Since there are 10 digits, we use 10
clusters.

We evaluate the performance on the test set. Since some of the
algorithms produce a ``soft'' clustering (all except K-Means),
represented by the matrix $T$, for each test image $i$ we assign it to the
cluster $j$ with the largest value in $T = (t_{i,j})$.
We measure the average cost (i.e. average squared distance between each image and
its selected cluster), and the KL divergence between the empirical distribution of
images per cluster and the expected one (a uniform distribution).
The centers are initialized from a normal  distribution with a mean of 0 and a
standard deviation of $10^{-3}$.
Algorithms employing an OT-based approach perform 500 iterations to find $T$,
using either the Sinkhorn algorithm (with the Negentropy method) or
LBFGS (used by the rest of OT-based methods).
We use a sparsity-constraint of $k = \lceil 1.15 \cdot \frac{m}{n} \rceil$
(recall that $k$ is the maximum number of nonzeros per column).
Notice that using $k = \frac{m}{n}$, and assuming that $n$ is a divisor of $m$,
would necessary require that the number of nonzeros per row is 1.
Thus, our minimization problem would be equivalent to that of the unregularized OT.
Thus, we slightly soften the regularization.

Table~\ref{tab:mnist_results} shows the results of the experiment,
averaged over 20 different random seeds.
The best cost is achieved by the Soft K-Means algorithm,
but the resulting clustering is quite unbalanced, as reported by the
KL divergence metric. On the other hand, all OT-based approaches achieve
similar costs, but the algorithm based on \S\ref{sec:sparse_ot}
obtains a significantly better balanced clustering.

\begin{table}[t]
\centering
\begin{tabular}{lcc}
\toprule
& Cost & KL\\
\midrule
K-Means        & $161.43 \pm 2.20$ & $0.227502 \pm 0.094380$\\
Soft K-Means   & $\mathbf{157.10 \pm 0.19}$ & $0.040756 \pm 0.007071$\\
Negentropy     & $157.45 \pm 0.04$ & $0.000029 \pm 0.000002$\\
Squared 2-norm & $157.63 \pm 0.26$ & $0.000018 \pm 0.000001$\\
Sparsity-constrained & $157.53 \pm 0.06$ & $\mathbf{0.000013 \pm 0.000001}$\\
\bottomrule
\end{tabular}
\caption{%
Clustering results on MNIST using different algorithms.  We report the average
cost on the test split (average distance between each test image and its
cluster), and the Kullback--Leibler divergence between the empirical
distribution of images per cluster and the expected one (a uniform
distribution). We average the results over 20 runs and report confidence
intervals at 95\%.
The algorithm proposed in \S\ref{sec:sparse_ot} achieves the most balanced
clustering, and a comparable cost to other OT-based solutions.
\label{tab:mnist_results}}
\end{table}

\section{Proofs}

\subsection{Weak duality (Proposition~\ref{prop:weak_duality})}
\label{proof:weak_duality}

Recall that
\begin{equation}
P_\Omega(\a, \b, C)
= \min_{T \in \cU(\a, \b)}
\langle T, C \rangle + \sum_{j=1}^n \Omega(\t_j)
= \min_{\substack{T \in \RR_+^{m \times n}\\T\ones_n=\a\\T^\top\ones_m=\b}}
\sum_{j=1}^n \langle \t_j, \c_j \rangle + \Omega(\t_j).
\end{equation}
We add Lagrange multipliers for the two equality constraints but keep the
constraint $T \in \RR_+^{m \times n}$ explicitly. The Lagrangian is then
\begin{align}
L(T, \balpha, \bbeta)
&\coloneqq
\sum_{j=1}^n \langle \t_j, \c_j \rangle + \Omega(\t_j)
- \langle \balpha, T\ones_n - \a \rangle
- \langle \bbeta, T^\top \ones_m - \b \rangle \\
&=
\sum_{j=1}^n \langle \t_j, \c_j - \balpha - \beta_j \ones_m\rangle
+ \Omega(\t_j)
+ \langle \balpha, \a \rangle
+ \langle \bbeta, \b \rangle.
\end{align}
Using the inequality
$\min_{\u} \max_{\v} f(\u, \v)
\ge
\max_{\v} \min_{\u} f(\u, \v)$
twice, we have
\begin{align}
P_\Omega(\a, \b, C)
&= \min_{T \in \RR_+^{m \times n}}
\max_{\balpha \in \RR^m}
\max_{\bbeta \in \RR^n}
L(T, \balpha, \bbeta) \\
&\ge
\max_{\balpha \in \RR^m}
\min_{T \in \RR_+^{m \times n}}
\max_{\bbeta \in \RR^n}
L(T, \balpha, \bbeta) \\
&\ge
\max_{\balpha \in \RR^m}
\max_{\bbeta \in \RR^n}
\min_{T \in \RR_+^{m \times n}}
L(T, \balpha, \bbeta).
\end{align}
For the first inequality, we have
\begin{align}
\max_{\balpha \in \RR^m}
\min_{T \in \RR_+^{m \times n}}
\max_{\bbeta \in \RR^n}
L(T, \balpha, \bbeta)
&=
\max_{\balpha \in \RR^m}
\langle \balpha, \a \rangle
+
\min_{T \in \RR_+^{m \times n}}
\max_{\bbeta \in \RR^n}
\langle \bbeta, \b \rangle +
\sum_{j=1}^n \langle \t_j, \c_j - \balpha - \beta_j \ones_m\rangle +
\Omega(\t_j) \\
&= \max_{\balpha \in \RR^m}
\langle \balpha, \a \rangle
+ \sum_{j=1}^n \min_{\t_j \in b_j \triangle^m}
\langle \t_j, \c_j - \balpha \rangle
+ \Omega(\t_j) \\
&= \max_{\balpha \in \RR^m}
\langle \balpha, \a \rangle -
\sum_{j=1}^n \Omega^*_{b_j}(\balpha - \c_j) \\
&= S_\Omega(\a, \b, C).
\end{align}
For the second inequality, we have
\begin{align}
\max_{\balpha \in \RR^m}
\max_{\bbeta \in \RR^n}
\min_{T \in \RR_+^{m \times n}}
L(T, \balpha, \bbeta)
&=
\max_{\balpha \in \RR^m}
\max_{\bbeta \in \RR^n}
\langle \balpha, \a \rangle + \langle \bbeta, \b \rangle
+ \sum_{j=1}^n \min_{\t_j \in \RR^m_+}
\langle \t_j, \c_j - \balpha - \beta_j \ones_m\rangle
+ \Omega(\t_j) \\
&= \max_{\balpha \in \RR^m}
\max_{\bbeta \in \RR^n}
\langle \balpha, \a \rangle + \langle \bbeta, \b \rangle -
\sum_{j=1}^n \Omega^*_+(\balpha + \beta_j \ones_m - \c_j) \\
&= D_\Omega(\a, \b, C).
\end{align}
To summarize, we showed
$P_\Omega(\a, \b, C) \ge S_\Omega(\a, \b, C) \ge D_\Omega(\a, \b, C)$.

\subsection{Dual-primal link}
\label{proof:dual_primal_link}

When the solution of the maximum below is unique,
$\t_j^\star$ can be uniquely determined for $j \in [n]$ from
\begin{align}
\t^\star_j
&= \nabla \Omega^*_+(\balpha^\star + \beta_j^\star \ones_m - \c_j) 
= \argmax_{\t_j \in \RR^M_+} ~
\langle \balpha^\star + \beta_j^\star \ones_m - \c_j, \t_j \rangle -
\Omega(\t_j) \\
&= \nabla \Omega^*_{b_j}(\balpha^\star - \c_j)
= \argmax_{\t_j \in \triangle^m} ~
\langle \balpha^\star - \c_j, \t_j \rangle - \Omega(\t_j).
\label{eq:tj_no_ties}
\end{align}
See Table \ref{tab:expressions} for examples.
When the maximum is not unique, $\t^\star_j$ is jointly determined by
\begin{align}
\t^\star_j
&\in \partial \Omega^*_+(\balpha^\star + \beta_j^\star \ones_m - \c_j) 
= \argmax_{\t_j \in \RR^M_+} ~
\langle \balpha^\star + \beta_j^\star \ones_m - \c_j, \t_j \rangle -
\Omega(\t_j) \\
&\in \partial \Omega^*_{b_j}(\balpha^\star - \c_j)
= \argmax_{\t_j \in \triangle^m} ~
\langle \balpha^\star - \c_j, \t_j \rangle - \Omega(\t_j),
\end{align}
where $\partial$ indicates the subdifferential,
and by the primal feasability $T^\star \in \cU(\a, \b)$, or more explicitly
\begin{equation}
T^\star \in \RR^{m \times n}_+, 
\quad
T^\star \ones_n = \a
\quad \text{and} \quad
(T^\star)^\top \ones_m = \b.
\end{equation}
This also implies $\langle T^\star, \ones_m \ones_n^\top \rangle = 1$.

\paragraph{Unregularized case.}

When $\Omega = 0$, for the dual, we have
\begin{equation}
\partial \Omega_+^*(\s_j) =
\argmax_{\t_j \in \RR_+^m} ~
\langle \s_j, \t_j \rangle.
\end{equation}
We note that the problem is coordinate-wise separable with
\begin{equation}
\argmax_{t_{i,j} \in \RR_+} ~ t_{i,j} \cdot s_{i,j}
=
\begin{cases}
    \varnothing & \text{if } s_{i,j} > 0 \\
    \RR_{++} & \text{if }  s_{i,j} = 0 \\
    \{0\} & \text{if }  s_{i,j} < 0 \\
\end{cases}.
\end{equation}
With $s_{i,j} = \alpha^\star_i + \beta^\star_j - c_{i,j}$, we therefore obtain
\begin{equation}
\begin{cases}
    t^\star_{i,j} > 0 & \text{if } \alpha^\star_i + \beta^\star_j = c_{i,j} \\
    t^\star_{i,j} = 0 & \text{if } \alpha^\star_i + \beta^\star_j < c_{i,j} \\
\end{cases},
\end{equation}
since $s_{i,j} > 0$ is dual infeasible.
We can therefore use $\balpha^\star$ and $\bbeta^\star$ to identify 
the support of $T^\star$.
The size of that support is at most $m + n - 1$
\citep[Proposition 3.4]{peyre_2017}.
Using the marginal constraints $T^\star \ones_n=\a$ and 
$(T^\star)^\top\ones_m=\b$,
we can therefore form a system of linear equations of size $m+n$
to recover $T^\star$.

Likewise, for the semi-dual, 
with $\s_j = \balpha^\star - \c_j$,
we have
\begin{align}
\t_j^\star
&\in \partial \Omega_{b_j}^*(\s_j) \\
&= \argmax_{\t_j \in \triangle^m} ~
\langle \s_j, \t_j \rangle \\
&= \conv(\{\v_1, \dots, \v_{|S_j|}\}) \\
&= \conv(S_j),
\end{align}
where 
$S_j \coloneqq \conv(\{\e_i \colon i \in \argmax_{i \in [m]} s_{i,j}\})$.
Let us gather 
$\v_1, \dots, \v_{|S_j|}$
as a matrix $V_j \in \RR^{m \times |S_j|}$.
There exists $\w_j \in \triangle^{|S_j|}$ such that
$\t^\star_j = V_j \w_j$. Using the primal feasability,
we can solve with respect to $\w_j$ for $j \in [n]$. This
leads to a (potentially undertermined) system of linear equations with
$\sum_{j=1}^n |S_j|$ unknowns and $m + n$ equations.

\paragraph{Squared $k$-support norm.}

We now discuss $\Omega = \Psi$,
as defined in~\eqref{eq:sq_k_support_norm}.
When the maximum is unique (no ties), $\t^\star_j$ is uniquely determined by
\eqref{eq:tj_no_ties}. We now discuss the case of ties.

For the dual, with $\s_j = \balpha^\star + \beta_j^\star \ones_m - \c_j$,
we have
\begin{align}
\t_j^\star 
&\in \partial \Omega_+^*(\s_j) \\
&= \argmax_{\t_j \in \RR_+^m} ~
\langle \s_j, \t_j \rangle - \Omega(\t_j) \\
&= \conv(\{\v_1, \dots, \v_{|S_j|}\}) \\
&\coloneqq \conv(\{[\u_j]_+ \colon \u_j \in S_j\}),
\end{align}
where $S_j \coloneqq \text{topk}(\s_j)$ is a set containing all possible
top-$k$ vectors (the set is a singleton if there are no ties in
$\s_j$, meaning that there is only one possible top-$k$ vector).

For the semi-dual, with $\s_j = \balpha^\star - \c_j$,
we have
\begin{align}
\t_j^\star 
&\in \partial \Omega_{b_j}^*(\s_j) \\
&= \argmax_{\t_j \in \triangle^m} ~
\langle \s_j, \t_j \rangle - \Omega(\t_j) \\
&= \conv(\{\v_1, \dots, \v_{|S_j|}\}) \\
&\coloneqq \conv(\{[\u_j - \tau_j]_+ \colon \u_j \in S_j\}),
\end{align}
where 
$\tau_j$ is such that $\sum_{i=1}^k [s_{[i], j} - \tau_j]_+ = b_j$.
Again, we can combine these conditions with the primal feasability $T^\star \in
\cU(\a, \b)$ to obtain a system of linear equations.
Unfortunately, in case of ties, ensuring that $T^\star \in \cU(\a, \b)$ by
solving this system may cause $\t^\star_j \not \in \cB_k$.
Another situation causing $\t^\star_j \not \in \cB_k$ is if $k$ is set to a
smaller value than the maximum number of nonzero elements in the columns of the
primal LP solution.

\subsection{Primal interpretation
(Proposition~\ref{prop:primal_interpretation})}
\label{proof:primal_interpretation}

For the semi-dual, we have
\begin{align}
S_\Omega(\a, \b, C)
=&\max_{\balpha \in \RR^m}
\langle \balpha, \a \rangle
- \sum_{j=1}^n \Omega_{b_j}^*(\balpha - \c_j) \\
=& \max_{\substack{\balpha \in \RR^m\\\bmu_j \colon \bmu_j = \balpha - \c_j}}
\langle \balpha, \a \rangle
- \sum_{j=1}^n \Omega_{b_j}^*(\bmu_j) \\
=&
\max_{\substack{\balpha \in \RR^m\\\bmu_j \in \RR^m}}
\min_{T \in \RR^{m \times n}}
\langle \balpha, \a \rangle
- \sum_{j=1}^n \Omega_{b_j}^*(\bmu_j)
+\sum_{j=1}^n \langle \t_j, \bmu_j - \balpha + \c_j \rangle \\
=& \min_{T \in \RR^{m \times n}}
 \langle T, C \rangle +
\max_{\balpha \in \RR^m}
\langle \balpha, T\ones_n - \a \rangle +
\sum_{j=1}^n
\max_{\bmu_j \in \RR^m}
\langle \t_j, \bmu_j \rangle - \Omega_{b_j}^*(\bmu_j)  \\
=& \min_{\substack{T \in \RR^{m \times n}\\T\ones_n=\a}}
\langle T, C \rangle + \sum_{j=1}^n \Omega_{b_j}^{**}(\t_j),
\end{align}
where we used that strong duality holds, since the conjugate is always convex,
even if $\Omega$ is not.

Likewise, for the dual, we have
\begin{align}
D_\Omega(\a, \b, C)
=&\max_{\balpha \in \RR^m, \bbeta \in \RR^n}
\langle \balpha, \a \rangle + \langle \bbeta, \b \rangle
- \sum_{j=1}^n \Omega_+^*(\balpha + \beta_j \ones_m - \c_j) \\
=& \max_{\substack{\balpha \in \RR^m, \bbeta \in \RR^n\\
\bmu_j \colon \bmu_j = \balpha + \beta_j \ones_m - \c_j}}
\langle \balpha, \a \rangle + \langle \bbeta, \b \rangle
- \sum_{j=1}^n \Omega_+^*(\bmu_j) \\
=&
\max_{\substack{\balpha \in \RR^m, \bbeta \in \RR^n\\
\bmu_j \in \RR^m}}
\min_{T \in \RR^{m \times n}}
\langle \balpha, \a \rangle + \langle \bbeta, \b \rangle
- \sum_{j=1}^n \Omega_+^*(\bmu_j)
+\sum_{j=1}^n \langle \t_j, \bmu_j - \balpha - \beta_j \ones_m + \c_j \rangle \\
=& \min_{T \in \RR^{m \times n}}
 \langle T, C \rangle +
\max_{\balpha \in \RR^m}
\langle \balpha, T\ones_n - \a \rangle +
\max_{\bbeta \in \RR^n}
\langle \bbeta, T^\top\ones_m - \b \rangle +
\sum_{j=1}^n
\max_{\bmu_j \in \RR^m}
\langle \t_j, \bmu_j \rangle - \Omega_+^*(\bmu_j)  \\
=& \min_{\substack{T \in \RR^{m \times n}\\T\ones_n=\a\\T^\top\ones_m=\b}}
\langle T, C \rangle + \sum_{j=1}^n \Omega_+^{**}(\t_j).
\end{align}

\subsection{Closed-form expressions (Table~\ref{tab:expressions})}
\label{proof:closed_forms}

The expressions for the unregularized, negentropy and quadratic cases
are provided in \citep[Table 1]{blondel_2018}. We therefore focus
on the top-$k$ case.

Plugging~\eqref{eq:k-sparse-nn} back into
$\langle \s, \t \rangle - \frac{1}{2} \|\t\|^2_2$,
we obtain
\begin{equation}
\Omega^*_+(\s)
=
\sum_{i=1}^k [s_{[i]}]_+ s_{[i]} -
\frac{1}{2} [s_{[i]}]_+^2
=
\sum_{i=1}^k [s_{[i]}]_+^2 -
\frac{1}{2} [s_{[i]}]_+^2
=
\frac{1}{2}\sum_{i=1}^k [s_{[i]}]_+^2.
\end{equation}
Plugging~\eqref{eq:k-sparse-simplex} back into
$\langle \s, \t \rangle - \frac{1}{2} \|\t\|^2_2$,
we obtain
\begin{align}
\Omega^*_{b_j}(\s)
&= \sum_{i=1}^k [s_{[i]} - \tau]_+ s_{[i]}
- \frac{1}{2} \sum_{i=1}^k [s_{[i]} - \tau]_+^2 \\
&= \sum_{i=1}^k \mathbb{1}_{s_{[i]} \ge \tau}
(s_{[i]} - \tau) s_{[i]}
- \frac{1}{2} \sum_{i=1}^k \mathbb{1}_{s_{[i]} \ge \tau}
(s_{[i]} - \tau)^2 \\
&= \frac{1}{2} \sum_{i=1}^k \mathbb{1}_{s_{[i]} \ge \tau}
(s_{[i]}^2 - \tau^2).
\end{align}

\subsection{Useful lemmas}

\begin{lemma}{Conjugate of the squared $k$-support norm}
\label{lemma:conjugate_squared_k_support_norm}

Let us define the squared $k$-support norm for all $\t \in \RR^m$ by
\begin{equation}
\Psi(\t) \coloneqq
\frac{1}{2} \min_{\blambda \in \RR^m} \sum_{i=1}^m \frac{t_i^2}{\lambda_i}
\quad \text{s.t.} \quad
\langle \blambda, \ones \rangle = k,
0 < \lambda_i \le 1, ~\forall i \in [m].
\end{equation}
Its conjugate for all $\s \in \RR^m$ is the squared $k$-support dual norm:
\begin{equation}
\Psi^*(\s) =
\frac{1}{2} \sum_{i=1}^k |s|_{[i]}^2.
\end{equation}
\end{lemma}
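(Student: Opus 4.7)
The plan is to compute $\Psi^*(\s) = \sup_{\t} \langle \s, \t \rangle - \Psi(\t)$ by plugging in the variational definition of $\Psi$ and swapping the two levels of optimization. Rewriting $-\min_{\blambda}$ as $\sup_{\blambda}$ and using the trivial identity $\sup_\t \sup_\blambda = \sup_\blambda \sup_\t$ (no minimax theorem is needed since both are suprema), I obtain
\begin{equation}
\Psi^*(\s) = \sup_{\blambda \in \Lambda} \sup_{\t \in \RR^m} \left[ \langle \s, \t \rangle - \frac{1}{2} \sum_{i=1}^m \frac{t_i^2}{\lambda_i} \right],
\end{equation}
where $\Lambda \coloneqq \{\blambda : \langle \blambda, \ones \rangle = k,~ 0 < \lambda_i \le 1 ~\forall i\}$.

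Next, for fixed $\blambda \in \Lambda$ the inner supremum is separable and coordinate-wise strictly concave quadratic, with closed-form maximizer $t_i^\star = \lambda_i s_i$ and envelope value $\tfrac{1}{2} \lambda_i s_i^2$. Substituting reduces the problem to the linear program
\begin{equation}
\Psi^*(\s) = \frac{1}{2} \sup_{\blambda \in \Lambda} \sum_{i=1}^m \lambda_i s_i^2.
\end{equation}
Since the coefficients $s_i^2$ are non-negative and the closure $\overline{\Lambda}$ is a polytope defined by $\sum_i \lambda_i = k$ and $0 \le \lambda_i \le 1$, the supremum on $\overline{\Lambda}$ is attained at the vertex that puts $\lambda_i = 1$ on the $k$ indices with the largest values of $s_i^2$ (equivalently, the top-$k$ absolute values $|s|_{[1]}, \ldots, |s|_{[k]}$) and $\lambda_i = 0$ elsewhere. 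This vertex value is exactly $\sum_{i=1}^k |s|_{[i]}^2$, giving the claimed expression.

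The only real obstacle is the strict inequality $\lambda_i > 0$ in the definition of $\Lambda$, which prevents us from evaluating the objective at a boundary vertex directly. I would resolve this by noting that the reduced objective $\sum_i \lambda_i s_i^2$ is linear (hence continuous) in $\blambda$ and that $\Lambda$ is dense in $\overline{\Lambda}$: any boundary vertex can be approached from within $\Lambda$ by distributing an arbitrarily small positive mass uniformly over the off-support coordinates while slightly shrinking the on-support coordinates to preserve $\sum_i \lambda_i = k$. Hence $\sup_\Lambda = \sup_{\overline{\Lambda}}$, and the linear-programming argument goes through. Everything else is routine calculus.
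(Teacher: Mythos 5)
Your proof is correct, and it takes a genuinely different---and arguably more direct---route than the paper's. The paper works backwards: it first establishes the variational representation $\sum_{i=1}^k |s|_{[i]}^2 = \max_{\blambda} \sum_i \lambda_i s_i^2$ over the capped-simplex polytope, using the top-$k$ variational formula of Lapin et al.\ and a Lagrangian strong-duality argument; it then conjugates that expression via a min--max swap to obtain $\psi^*(\t) = \frac{1}{4}\min_{\blambda}\sum_i t_i^2/\lambda_i$, and finally invokes the scaling identity $\Psi(\t) = \frac{1}{2}\psi^*(2\t)$, which implicitly requires the biconjugation $\psi^{**}=\psi$ (valid since $\psi$ is convex and closed) to land on the stated formula for $\Psi^*$. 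You instead compute $\Psi^*$ head-on from the given definition of $\Psi$: the interchange you use is a sup--sup, which needs no minimax theorem at all; the inner problem is a separable one-dimensional quadratic; and the outer problem is a linear program over (the closure of) the polytope $\{\blambda : \langle\blambda,\ones\rangle = k,\ 0\le\lambda_i\le 1\}$, whose vertices are the $0$--$1$ indicators of $k$-subsets, so the optimum selects the $k$ largest values of $s_i^2$. Your handling of the open constraint $\lambda_i>0$ via density of $\Lambda$ in $\overline{\Lambda}$ (e.g., mixing with the barycenter $(k/m)\ones$) is exactly the right fix and is the only non-routine point. What your approach buys is self-containedness: no external lemma, no strong duality, no biconjugation. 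What the paper's approach buys is the variational representation of the top-$k$ squared sum as an intermediate object, which it reuses in the biconjugate derivations of Proposition~\ref{prop:biconjugates}. One tiny remark: the non-negativity of the coefficients $s_i^2$ is not actually needed for the vertex argument (a linear functional on a polytope always attains its maximum at a vertex); it only matters for identifying \emph{which} vertex is optimal, which your top-$k$ selection already does correctly.
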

\begin{proof}
This result was proved in previous works \citep{argyriou_2012,mcdonald_2016}.
We include here an alternative proof for completeness.

Using \citet[Lemma 1]{lapin_2015}, we have
for all $\bm{a} \in \RR^m$
\begin{equation}
\sum_{i=1}^k  a_{[i]}
= \min_{v \in \RR} kv + \sum_{i=1}^m [a_i - v]_+.
\end{equation}
We therefore obtain the variational formulation
\begin{align}
\psi(\s)
\coloneqq \sum_{i=1}^k |s|_{[i]}^2
= \min_{v \in \RR} kv + \sum_{i=1}^m [|s_i|^2 - v]_+
= \min_{v \in \RR} kv + \sum_{i=1}^m [s_i^2 - v]_+.
\end{align}
We rewrite the problem in constrained form
\begin{equation}
\psi(\s)
= \min_{v \in \RR, \bxi \in \RR^m_+}
kv + \langle \bxi, \ones \rangle
\quad \text{s.t.} \quad
\xi_i \ge s_i^2 - v \quad \forall i \in [m].
\end{equation}
We introduce Lagrange multipliers $\blambda \in \RR^m_+$,
for the inequality constraints but keep the non-negative constraints explicitly
\begin{equation}
\psi(\s) = \min_{v \in \RR, \bxi \in \RR^m_+}
\max_{\blambda \in \RR^m_+}
kv + \langle \bxi, \ones \rangle
+ \langle \blambda, \s \circ \s - v \ones - \bxi \rangle.
\end{equation}
Using strong duality, we have
\begin{align}
\psi(\s)
&= \max_{\blambda \in \RR^m_+}
\sum_{i=1}^m \lambda_i s_i^2 +
\min_{v \in \RR} v (k - \langle \blambda, \ones \rangle) +
\min_{\bxi \in \RR^m_+} \langle \bxi, \ones - \blambda \rangle \\
&= \max_{\blambda \in \RR^m_+}
\sum_{i=1}^m \lambda_i s_i^2
\quad \text{s.t.} \quad
\langle \blambda, \ones \rangle = k,
\lambda_i \le 1 ~\forall i \in [m].
\end{align}
We therefore obtain
\begin{align}
\psi^*(\t)
&= \max_{\s \in \RR^m} \langle \s, \t \rangle
- \max_{\blambda \in \RR_+^m}
\sum_{i=1}^m \lambda_i s_i^2
\quad \text{s.t.} \quad
\langle \blambda, \ones \rangle = k,
\lambda_i \le 1 ~\forall i \in [m] \\
&= \min_{\blambda \in \RR_+^m}
\max_{\s \in \RR^m} \langle \s, \t \rangle
- \sum_{i=1}^m \lambda_i s_i^2
\quad \text{s.t.} \quad
\langle \blambda, \ones \rangle = k,
\lambda_i \le 1 ~\forall i \in [m] \\
&= \frac{1}{4} \min_{\blambda \in \RR_+^m}
\sum_{i=1}^m \frac{t_i^2}{\lambda_i}
\quad \text{s.t.} \quad
\langle \blambda, \ones \rangle = k,
\lambda_i \le 1 ~\forall i \in [m].
\end{align}
Using $\Psi(\t) = \frac{1}{2} \psi^*(2\t)$ gives the desired result.
\end{proof}

\begin{lemma}
For all $\s \in \RR^m$ and $b > 0$
\begin{equation}
\max_{\t \in b \triangle^m}
\langle \s, \t \rangle - \frac{1}{2} \|\t\|_2^2
= \min_{\theta \in \RR}
\frac{1}{2} \sum_{i=1}^m [s_i - \theta]_+^2 + \theta b
\end{equation}
with optimality condition
$\sum_{i=1}^m [s_i - \theta]_+ = b$.
\label{lemma:simplex}
\end{lemma}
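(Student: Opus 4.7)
The plan is to apply Lagrangian duality with respect to the equality constraint $\langle \t, \ones\rangle = b$, keeping the non-negativity constraint $\t \ge \zeros$ explicit. Since the primal objective is a strictly concave quadratic, the feasible set $b\triangle^m$ is a nonempty compact convex polytope, and Slater's condition is trivially satisfied, so strong duality holds and both problems have unique primal solutions.

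First I would form the Lagrangian
\begin{equation}
L(\t,\theta) = \langle \s, \t \rangle - \tfrac{1}{2}\|\t\|_2^2 - \theta\bigl(\langle \t, \ones\rangle - b\bigr),
\end{equation}
so that by strong duality
\begin{equation}
\max_{\t \in b\triangle^m} \langle \s, \t \rangle - \tfrac{1}{2}\|\t\|_2^2
= \min_{\theta \in \RR} \max_{\t \in \RR^m_+} \langle \s - \theta\ones, \t\rangle - \tfrac{1}{2}\|\t\|_2^2 + \theta b.
\end{equation}
The inner maximization is coordinate-wise separable: for each $i \in [m]$,
\begin{equation}
\max_{t_i \ge 0} (s_i - \theta)\, t_i - \tfrac{1}{2}\, t_i^2 = \tfrac{1}{2}[s_i - \theta]_+^2,
\end{equation}
attained at $t_i^\star = [s_i - \theta]_+$. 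Substituting back yields exactly the dual objective $\tfrac{1}{2}\sum_{i=1}^m [s_i-\theta]_+^2 + \theta b$, which proves the claimed identity.

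For the optimality condition, I would note that the dual objective $g(\theta) \coloneqq \tfrac{1}{2}\sum_{i=1}^m [s_i - \theta]_+^2 + \theta b$ is convex in $\theta$, differentiable everywhere (since $u \mapsto [u]_+^2$ is $C^1$), with
\begin{equation}
g'(\theta) = -\sum_{i=1}^m [s_i - \theta]_+ + b.
\end{equation}
Setting $g'(\theta) = 0$ gives the stated condition $\sum_{i=1}^m [s_i - \theta]_+ = b$. Existence of a root follows from $g'(\theta)\to b > 0$ as $\theta \to \infty$ and $g'(\theta)\to -\infty$ as $\theta \to -\infty$, together with continuity. The main subtlety is merely justifying strong duality: this is routine here because the quadratic primal is strictly concave over a bounded polyhedron, so there is nothing delicate beyond invoking a standard result (e.g., strong duality for convex quadratic programs with linear constraints).
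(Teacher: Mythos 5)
Your proof is correct and follows essentially the same route as the paper's: dualize the equality constraint $\langle \t, \ones\rangle = b$ while keeping $\t \ge \zeros$ explicit, swap the max and min by strong duality, and evaluate the coordinate-wise inner maximum to get $\tfrac{1}{2}[s_i-\theta]_+^2$. You are in fact slightly more careful than the paper, which silently exchanges the max and min and does not spell out the existence of a root of the dual stationarity condition.
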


\begin{proof}
\begin{align}
\max_{\t \in b \triangle^m}
\langle \s, \t \rangle - \frac{1}{2} \|\t\|_2^2
&= \max_{\t \in \RR_+^m}
\min_{\theta \in \RR}
\langle \s, \t \rangle
- \frac{1}{2} \|\t\|_2^2
- \theta (\langle \t, \ones \rangle - b) \\
&= \max_{\t \in \RR^m_+}
\min_{\theta \in \RR}
\langle \s - \theta \ones, \t \rangle
- \frac{1}{2} \|\t\|_2^2
+ \theta b \\
&= \min_{\theta \in \RR} \theta b
+
\max_{\t \in \RR_+^m}
\langle \s - \theta \ones, \t \rangle
- \frac{1}{2} \|\t\|_2^2 \\
&= \min_{\theta \in \RR}
\frac{1}{2} \sum_{i=1}^m [s_i - \theta]_+^2 + \theta b.
\end{align}
From which we obtain the optimality condition
\begin{equation}
\sum_{i=1}^m [s_i - \theta]_+ = b.
\end{equation}
\end{proof}

\subsection{Biconjugates (Proposition~\ref{prop:biconjugates})}
\label{proof:biconjugates}

We use $\Omega$ defined in~\eqref{eq:Omega_l2_l0}. 

\paragraph{Derivation of $\Omega^{**}_+$.}

Recall that
\begin{equation}
\Omega^{*}_+(\s)
= \frac{1}{2} \sum_{i=1}^k [s_{[i]}]_+^2.
\end{equation}
Using Lemma~\ref{lemma:conjugate_squared_k_support_norm}, we then have
\begin{align}
\Omega^{**}_+(\t)
&= \max_{\s \in \RR^m} \langle \s, \t \rangle - \Omega^{*}_+(\s) \\
&= \max_{\s \in \RR^m} \langle \s, \t \rangle
-\frac{1}{2} \sum_{i=1}^k [s_{[i]}]_+^2 \\
&= \max_{\s \in \RR^m, \bxi \in \RR^m}
\langle \s, \t \rangle - \Psi^*(\bxi)
\quad \text{s.t.} \quad \bxi \ge \s \\
&= \max_{\s \in \RR^m, \bxi \in \RR^m}
\min_{\bmu \in \RR^m_+}
\langle \s, \t \rangle - \Psi^*(\bxi)
- \langle \bmu, \s - \bxi \rangle \\
&=
\min_{\bmu \in \RR^m_+}
\max_{\s \in \RR^m}
\langle \s, \t - \bmu \rangle  +
\max_{\bxi \in \RR^m}
\langle \bmu, \bxi \rangle  - \Psi^*(\bxi) \\
&=
\begin{cases}
\Psi(\t) & \text{if }  \t \in \RR^m_+ \\
\infty, & \text{otherwise}
\end{cases}.
\end{align}
Note that $\bxi$ is not constrained to be non-negative because
it is squared in the objective.

\paragraph{Derivation of $\Omega^{**}_{b}$.}

Recall that
\begin{equation}
\Omega^{*}_{b}(\s)
\coloneqq \max_{\t \in b \triangle^m \cap \cB_k}
\langle \s, \t \rangle - \frac{1}{2} \|\t\|^2_2.
\end{equation}
Using Lemma~\ref{lemma:conjugate_squared_k_support_norm},
Lemma~\ref{lemma:simplex}
and~\eqref{eq:k-sparse-simplex},
we obtain
\begin{align}
\Omega^{*}_{b}(\s)
&= \min_{\tau \in \RR}
\frac{1}{2} \sum_{i=1}^k [s_{[i]} - \tau]_+^2 + \tau b \\
&= \min_{\tau \in \RR, \bxi \in \RR^m} \Psi^*(\bxi) + \tau b
\quad \text{s.t.} \quad \bxi \ge \s - \tau \ones \\
&= \min_{\tau \in \RR, \bxi \in \RR^m}
\max_{\bmu \in \RR^m_+}
\Psi^*(\bxi) + \tau b
+ \langle \bmu, \s - \tau \ones - \bxi \rangle.
\end{align}
We then have
\begin{align}
\Omega^{**}_{b}(\t)
&= \max_{\s \in \RR^m} \langle \s, \t \rangle - \Omega^{*}_{b}(\s) \\
&= \min_{\bmu \in \RR^m_+}
\max_{\tau \in \RR} \tau(\langle \bmu, \ones \rangle - b) +
\max_{\s \in \RR^m}  +
\langle \s, \t - \bmu \rangle  +
\max_{\bxi \in \RR^m} \langle \bxi , \bmu \rangle - \Psi^*(\bxi) \\
&=
\begin{cases}
\Psi(\t) & \text{if }  \t \in b \triangle^m \\
\infty, & \text{otherwise}
\end{cases}.
\end{align}

\paragraph{Proof of proposition.}

We recall that $\Psi$ is defined in~\eqref{eq:sq_k_support_norm}.
From Proposition \ref{prop:primal_interpretation},
we have $D_\Omega(\a, \b, C) = S_\Omega(\a, \b, C) = P_\Psi(\a, \b, C)$.
From Proposition \ref{prop:weak_duality} (weak duality),
we have $P_\Psi(\a, \b, C) \le P_\Omega(\a, \b, C)$.

Assuming no ties in
$\balpha^\star + \beta_j^\star \ones_m - \c_j$ 
or in
$\balpha^\star - \c_j$ for all $j \in [n]$,
we know that $\t^\star_j \in \cB_k$ for all $j \in [n]$.
Furthermore, from \eqref{eq:sq_k_support_norm}, we have
for all $\t_j \in \cB_k$ that
$\Omega(\t_j) = \Psi(\t_j) = \frac{1}{2}\|\t_j\|_2^2$.
Therefore, without any ties, we have $P_\Psi(\a, \b, C) = P_\Omega(\a, \b, C)$.

\subsection{Limit cases (Proposition~\ref{prop:limit_cases})}
\label{proof:limit_cases}

In the limit case $k=1$ with $\Omega$ defined in~\eqref{eq:Omega_l2_l0}, we have
\begin{align}
\Omega^*_{b}(\s)
&= \max_{\t \in b \triangle^m \cap \cB_1}
\langle \s, \t \rangle - \frac{\gamma}{2} \|\t\|_2^2 \\
&= \max_{\t \in b \{\e_1, \dots, \e_m\}}
\langle \s, \t \rangle - \frac{\gamma}{2} \|\t\|_2^2 \\
&= b \max_{\t \in \{\e_1, \dots, \e_m\}}
\langle \s, \t \rangle - \frac{\gamma}{2} b^2 \\
&= b \max_{i \in [m]} s_i - \frac{\gamma}{2} b^2.
\end{align}
We therefore get
\begin{align}
S_\Omega(\a, \b, C)
&= \max_{\balpha \in \RR^m}
\langle \a, \balpha \rangle - \sum_{j=1}^n \Omega^*_{b_j}(\balpha - \c_j) \\
&= \max_{\balpha \in \RR^m}
\langle \a, \balpha \rangle - \sum_{j=1}^n b_j \max_{i \in [m]} \alpha_i -
c_{i,j} + \frac{\gamma}{2} \sum_{j=1}^n b_j^2 \\
&= S_0(\a, \b, C) + \frac{\gamma}{2} \sum_{j=1}^n b_j^2.
\end{align}

Likewise, we have
\begin{align}
\Omega^*_+(\s)
&= \max_{\t \in \RR^m_+ \cap \cB_1}
\langle \s, \t \rangle - \frac{\gamma}{2} \|\t\|_2^2 \\
&= \max_{i \in [m]}
\max_{t \in \RR_+} s_i t - \frac{\gamma}{2} t^2 \\
&= \frac{1}{2\gamma} \max_{i \in [m]} [s_i]^2_+.
\end{align}
We therefore get
\begin{align}
D_\Omega(\a, \b, C)
&= \max_{\balpha \in \RR^m, \bbeta \in \RR^n}
\langle \balpha, \a \rangle + \langle \bbeta, \b \rangle
- \sum_{j=1}^n \Omega_+^*(\balpha + \beta_j \ones_m - \c_j) \\
&= \max_{\balpha \in \RR^m, \bbeta \in \RR^n}
\langle \balpha, \a \rangle + \langle \bbeta, \b \rangle
- \frac{1}{2\gamma} \sum_{j=1}^n \max_{i \in [m]}
[\alpha_i + \beta_j - c_{i,j}]^2_+.
\end{align}
From Proposition~\ref{prop:biconjugates},
we have $D_\Omega = S_\Omega = S_0 + \frac{\gamma}{2} \|b\|^2_2
= P_0 + \frac{\gamma}{2} \|b\|^2_2$.


\begin{thebibliography}{55}
\providecommand{\natexlab}[1]{#1}
\providecommand{\url}[1]{\texttt{#1}}
\expandafter\ifx\csname urlstyle\endcsname\relax
  \providecommand{\doi}[1]{doi: #1}\else
  \providecommand{\doi}{doi: \begingroup \urlstyle{rm}\Url}\fi

\bibitem[Ahuja et~al.(1988)Ahuja, Magnanti, and Orlin]{ahuja_1988}
Ravindra~K Ahuja, Thomas~L Magnanti, and James~B Orlin.
\newblock \emph{Network flows}.
\newblock Cambridge, Mass.: Alfred P. Sloan School of Management,
  Massachusetts, 1988.

\bibitem[Amos et~al.(2022)Amos, Cohen, Luise, and Redko]{Amos2022meta}
Brandon Amos, Samuel Cohen, Giulia Luise, and Ievgen Redko.
\newblock Meta optimal transport.
\newblock \emph{arXiv preprint arXiv:2206.05262}, 2022.

\bibitem[An et~al.(2022)An, Lei, Xu, and Gu]{An2022efficient}
Dongsheng An, Na~Lei, Xiaoyin Xu, and Xianfeng Gu.
\newblock Efficient optimal transport algorithm by accelerated gradient
  descent.
\newblock In \emph{Proceedings of the 36th AAAI Conference on Artificial
  Intelligence}, volume~36, pp.\  10119--10128, 2022.

\bibitem[Argyriou et~al.(2012)Argyriou, Foygel, and Srebro]{argyriou_2012}
Andreas Argyriou, Rina Foygel, and Nathan Srebro.
\newblock Sparse prediction with the $ k $-support norm.
\newblock In \emph{Proceedings of the 26th Annual Conference on Neural
  Information Processing Systems}, volume~25, 2012.

\bibitem[Arjovsky et~al.(2017)Arjovsky, Chintala, and Bottou]{wgan}
Martin Arjovsky, Soumith Chintala, and L{\'e}on Bottou.
\newblock {W}asserstein generative adversarial networks.
\newblock In \emph{Proceedings of the 34th International Conference on Machine
  Learning (ICML)}, volume~70, pp.\  214--223, 2017.

\bibitem[Beck \& Hallak(2016)Beck and Hallak]{beck_2016}
Amir Beck and Nadav Hallak.
\newblock On the minimization over sparse symmetric sets: projections,
  optimality conditions, and algorithms.
\newblock \emph{Mathematics of Operations Research}, 41\penalty0 (1):\penalty0
  196--223, 2016.

\bibitem[Blondel et~al.(2018)Blondel, Seguy, and Rolet]{blondel_2018}
Mathieu Blondel, Vivien Seguy, and Antoine Rolet.
\newblock Smooth and sparse optimal transport.
\newblock In \emph{Proceedings of the 21st International Conference on
  Artificial Intelligence and Statistics (AISTATS)}, pp.\  880--889. PMLR,
  2018.

\bibitem[Blondel et~al.(2020)Blondel, Martins, and Niculae]{blondel_2020}
Mathieu Blondel, Andr{\'e} F.~T. Martins, and Vlad Niculae.
\newblock Learning with {F}enchel-{Y}oung losses.
\newblock \emph{Journal of Machine Learning Research}, 21\penalty0
  (35):\penalty0 1--69, 2020.

\bibitem[Bolte et~al.(2014)Bolte, Sabach, and Teboulle]{bolte_2014}
J{\'e}r{\^o}me Bolte, Shoham Sabach, and Marc Teboulle.
\newblock Proximal alternating linearized minimization for nonconvex and
  nonsmooth problems.
\newblock \emph{Mathematical Programming}, 146\penalty0 (1):\penalty0 459--494,
  2014.

\bibitem[Carlier et~al.(2015)Carlier, Oberman, and Oudet]{carlier_2015}
Guillaume Carlier, Adam Oberman, and Edouard Oudet.
\newblock Numerical methods for matching for teams and {W}asserstein
  barycenters.
\newblock \emph{ESAIM: Mathematical Modelling and Numerical Analysis},
  49\penalty0 (6):\penalty0 1621--1642, 2015.

\bibitem[Chern et~al.(2022)Chern, Hechtman, Davis, Guo, Majnemer, and
  Kumar]{Chern2022tpu}
Felix Chern, Blake Hechtman, Andy Davis, Ruiqi Guo, David Majnemer, and Sanjiv
  Kumar.
\newblock {TPU-KNN}: {K} nearest neighbor search at peak {FLOP/s}.
\newblock In \emph{Proceedings of the 36th Annual Conference on Neural
  Information Processing Systems (NeurIPS)}, 2022.

\bibitem[Clark et~al.(2022)Clark, Casas, Guy, Mensch, Paganini, Hoffmann,
  Damoc, Hechtman, Cai, Borgeaud, et~al.]{Clark2022unified}
Aidan Clark, Diego de~las Casas, Aurelia Guy, Arthur Mensch, Michela Paganini,
  Jordan Hoffmann, Bogdan Damoc, Blake Hechtman, Trevor Cai, Sebastian
  Borgeaud, et~al.
\newblock Unified scaling laws for routed language models.
\newblock In \emph{Proceedings of the 39th International Conference on Machine
  Learning (ICML)}, 2022.

\bibitem[Correia et~al.(2020)Correia, Niculae, Aziz, and Martins]{correia_2020}
Goncalo Correia, Vlad Niculae, Wilker Aziz, and Andr{\'e} Martins.
\newblock Efficient marginalization of discrete and structured latent variables
  via sparsity.
\newblock In \emph{Proceedings of the 34th Annual Conference on Neural
  Information Processing Systems (NeurIPS)}, volume~33, pp.\  11789--11802,
  2020.

\bibitem[Cuturi(2013)]{sinkhorn_distances}
Marco Cuturi.
\newblock Sinkhorn distances: Lightspeed computation of optimal transport.
\newblock In \emph{Proceedings of the 27th Annual Conference on Neural
  Information Processing Systems}, pp.\  2292--2300, 2013.

\bibitem[Deng et~al.(2009)Deng, Dong, Socher, Li, Li, and
  Fei-Fei]{Deng2009imagenet}
Jia Deng, Wei Dong, Richard Socher, Li-Jia Li, Kai Li, and Li~Fei-Fei.
\newblock {ImageNet}: A large-scale hierarchical image database.
\newblock In \emph{Proceedings of the IEEE/CVF Conference on Computer Vision
  and Pattern Recognition (CVPR)}, pp.\  248--255. IEEE, 2009.

\bibitem[Dessein et~al.(2018)Dessein, Papadakis, and Rouas]{rot_mover}
Arnaud Dessein, Nicolas Papadakis, and Jean-Luc Rouas.
\newblock Regularized optimal transport and the rot mover's distance.
\newblock \emph{Journal of Machine Learning Research}, 19\penalty0
  (15):\penalty0 1--53, 2018.

\bibitem[Dosovitskiy et~al.(2021)Dosovitskiy, Beyer, Kolesnikov, Weissenborn,
  Zhai, Unterthiner, Dehghani, Minderer, Heigold, Gelly, Uszkoreit, and
  Houlsby]{Dosovitskiy2021an}
Alexey Dosovitskiy, Lucas Beyer, Alexander Kolesnikov, Dirk Weissenborn,
  Xiaohua Zhai, Thomas Unterthiner, Mostafa Dehghani, Matthias Minderer, Georg
  Heigold, Sylvain Gelly, Jakob Uszkoreit, and Neil Houlsby.
\newblock An image is worth 16$\times$16 words: Transformers for image
  recognition at scale.
\newblock In \emph{International Conference on Learning Representations}, 2021.

\bibitem[Duchi et~al.(2008)Duchi, Shalev-Shwartz, Singer, and Chandra]{duchi}
John Duchi, Shai Shalev-Shwartz, Yoram Singer, and Tushar Chandra.
\newblock Efficient projections onto the $\ell_1$-ball for learning in high
  dimensions.
\newblock In \emph{Proc. of ICML}, 2008.

\bibitem[Fedus et~al.(2022{\natexlab{a}})Fedus, Dean, and
  Zoph]{Fedus2022review}
William Fedus, Jeff Dean, and Barret Zoph.
\newblock A review of sparse expert models in deep learning.
\newblock \emph{arXiv preprint arXiv:2209.01667}, 2022{\natexlab{a}}.

\bibitem[Fedus et~al.(2022{\natexlab{b}})Fedus, Zoph, and
  Shazeer]{Fedus2021switch}
William Fedus, Barret Zoph, and Noam Shazeer.
\newblock Switch transformers: Scaling to trillion parameter models with simple
  and efficient sparsity.
\newblock \emph{Journal of Machine Learning Research}, 23\penalty0
  (120):\penalty0 1--39, 2022{\natexlab{b}}.

\bibitem[Feydy et~al.(2019)Feydy, S{\'e}journ{\'e}, Vialard, Amari, Trouv{\'e},
  and Peyr{\'e}]{feydy_2019}
Jean Feydy, Thibault S{\'e}journ{\'e}, Francois-Xavier Vialard, Shun-ichi
  Amari, Alain Trouv{\'e}, and Gabriel Peyr{\'e}.
\newblock Interpolating between optimal transport and {MMD} using sinkhorn
  divergences.
\newblock In \emph{Proceedings of the 22nd International Conference on
  Artificial Intelligence and Statistics (AISTATS)}, pp.\  2681--2690. PMLR,
  2019.

\bibitem[Fukunaga \& Kasai(2021)Fukunaga and Kasai]{fukunaga_2021}
Takumi Fukunaga and Hiroyuki Kasai.
\newblock Wasserstein $k$-means with sparse simplex projection.
\newblock In \emph{Proceedings of the 26th International Conference on Pattern
  Recognition (ICPR)}, pp.\  1627--1634. IEEE, 2021.

\bibitem[Genevay et~al.(2019)Genevay, Dulac-Arnold, and Vert]{genevay_2019}
Aude Genevay, Gabriel Dulac-Arnold, and Jean-Philippe Vert.
\newblock Differentiable deep clustering with cluster size constraints.
\newblock \emph{arXiv preprint arXiv:1910.09036}, 2019.

\bibitem[Hiriart-Urruty \& Lemar{\'e}chal(1993)Hiriart-Urruty and
  Lemar{\'e}chal]{hiriart_1993}
Jean-Baptiste Hiriart-Urruty and Claude Lemar{\'e}chal.
\newblock \emph{Convex analysis and minimization algorithms II}, volume 305.
\newblock Springer Science \& Business Media, 1993.

\bibitem[Kennington \& Helgason(1980)Kennington and Helgason]{kennington_1980}
Jeff~L Kennington and Richard~V Helgason.
\newblock \emph{Algorithms for network programming}.
\newblock John Wiley \& Sons, Inc., 1980.

\bibitem[Kingma \& Ba(2015)Kingma and Ba]{ADAM}
Diederik~P Kingma and Jimmy Ba.
\newblock Adam: A method for stochastic optimization.
\newblock In \emph{Proceedings of the 3rd International Conference on Learning
  Representations {(ICLR)}}, 2015.

\bibitem[Kitagawa et~al.(2019)Kitagawa, M{\'e}rigot, and
  Thibert]{kitagawa_2019}
Jun Kitagawa, Quentin M{\'e}rigot, and Boris Thibert.
\newblock Convergence of a {Newton} algorithm for semi-discrete optimal
  transport.
\newblock \emph{Journal of the European Mathematical Society}, 21\penalty0
  (9):\penalty0 2603--2651, 2019.

\bibitem[Kool et~al.(2021)Kool, Maddison, and Mnih]{Kool2021unbiased}
Wouter Kool, Chris~J. Maddison, and Andriy Mnih.
\newblock Unbiased gradient estimation with balanced assignments for mixtures
  of experts.
\newblock \emph{NeurIPS I Can't Believe It's Not Better (ICBINB) Workshop},
  2021.

\bibitem[Kusner et~al.(2015)Kusner, Sun, Kolkin, and Weinberger]{word_mover}
Matt Kusner, Yu~Sun, Nicholas Kolkin, and Kilian Weinberger.
\newblock From word embeddings to document distances.
\newblock In \emph{Proceedings of the 32nd International Conference on Machine
  Learning (ICML)}, pp.\  957--966, 2015.

\bibitem[Kyrillidis et~al.(2013)Kyrillidis, Becker, Cevher, and
  Koch]{kyrillidis_2013}
Anastasios Kyrillidis, Stephen Becker, Volkan Cevher, and Christoph Koch.
\newblock Sparse projections onto the simplex.
\newblock In \emph{Proceedings of the 30th International Conference on Machine
  Learning (ICML)}, pp.\  235--243. PMLR, 2013.

\bibitem[Lapin et~al.(2015)Lapin, Hein, and Schiele]{lapin_2015}
Maksim Lapin, Matthias Hein, and Bernt Schiele.
\newblock Top-k multiclass {SVM}.
\newblock In \emph{Proceedings of the 29th Annual Conference on Neural
  Information Processing Systems}, volume~28, 2015.

\bibitem[Lepikhin et~al.(2021)Lepikhin, Lee, Xu, Chen, Firat, Huang, Krikun,
  Shazeer, and Chen]{Lepikhin2021gshard}
Dmitry Lepikhin, HyoukJoong Lee, Yuanzhong Xu, Dehao Chen, Orhan Firat, Yanping
  Huang, Maxim Krikun, Noam Shazeer, and Zhifeng Chen.
\newblock {GS}hard: Scaling giant models with conditional computation and
  automatic sharding.
\newblock In \emph{Proceedings of the 9th International Conference on Learning
  Representations (ICLR)}, 2021.

\bibitem[Lewis et~al.(2021)Lewis, Bhosale, Dettmers, Goyal, and
  Zettlemoyer]{Lewis2021base}
Mike Lewis, Shruti Bhosale, Tim Dettmers, Naman Goyal, and Luke Zettlemoyer.
\newblock {BASE} layers: Simplifying training of large, sparse models.
\newblock In \emph{Proceedings of the 38th International Conference on Machine
  Learning (ICML)}, pp.\  6265--6274, 2021.

\bibitem[Liu \& Nocedal(1989)Liu and Nocedal]{lbfgs}
Dong~C Liu and Jorge Nocedal.
\newblock On the limited memory {BFGS} method for large scale optimization.
\newblock \emph{Mathematical programming}, 45\penalty0 (1):\penalty0 503--528,
  1989.

\bibitem[Lorenz et~al.(2021)Lorenz, Manns, and Meyer]{lorenz_2021}
Dirk~A Lorenz, Paul Manns, and Christian Meyer.
\newblock Quadratically regularized optimal transport.
\newblock \emph{Applied Mathematics \& Optimization}, 83\penalty0 (3):\penalty0
  1919--1949, 2021.

\bibitem[McDonald et~al.(2016)McDonald, Pontil, and Stamos]{mcdonald_2016}
Andrew~M McDonald, Massimiliano Pontil, and Dimitris Stamos.
\newblock New perspectives on k-support and cluster norms.
\newblock \emph{The Journal of Machine Learning Research}, 17\penalty0
  (1):\penalty0 5376--5413, 2016.

\bibitem[M{\'e}rigot(2011)]{merigot_2011}
Quentin M{\'e}rigot.
\newblock A multiscale approach to optimal transport.
\newblock In \emph{Computer Graphics Forum}, volume~30, pp.\  1583--1592. Wiley
  Online Library, 2011.

\bibitem[Michelot(1986)]{michelot}
Christian Michelot.
\newblock A finite algorithm for finding the projection of a point onto the
  canonical simplex of $\mathbb{R}^n$.
\newblock \emph{Journal of Optimization Theory and Applications}, 50\penalty0
  (1):\penalty0 195--200, 1986.

\bibitem[Mustafa et~al.(2022)Mustafa, Riquelme, Puigcerver, Jenatton, and
  Houlsby]{Mustafa2022multimodal}
Basil Mustafa, Carlos Riquelme, Joan Puigcerver, Rodolphe Jenatton, and Neil
  Houlsby.
\newblock Multimodal contrastive learning with {LIMoE}: the language-image
  mixture of experts.
\newblock In \emph{Proceedings of the 36th Annual Conference on Neural
  Information Processing Systems (NeurIPS)}, 2022.

\bibitem[Ng(2000)]{ng_2000}
Michael~K Ng.
\newblock A note on constrained $k$-means algorithms.
\newblock \emph{Pattern Recognition}, 33\penalty0 (3):\penalty0 515--519, 2000.

\bibitem[Peyr{\'e} \& Cuturi(2019)Peyr{\'e} and Cuturi]{peyre_2017}
Gabriel Peyr{\'e} and Marco Cuturi.
\newblock \emph{Computational Optimal Transport: With applications to Data
  Science}, volume~11.
\newblock Now Publishers, Inc., 2019.

\bibitem[Pillutla et~al.(2018)Pillutla, Roulet, Kakade, and
  Harchaoui]{pillutla_2018}
Venkata~Krishna Pillutla, Vincent Roulet, Sham~M Kakade, and Zaid Harchaoui.
\newblock A smoother way to train structured prediction models.
\newblock In \emph{Proceedings of the 32nd Annual Conference on Neural
  Information Processing Systems (NeurIPS)}, volume~31, 2018.

\bibitem[Piti\'e et~al.(2007)Piti\'e, Kokaram, and Dahyot]{Pitie2007automated}
Francois Piti\'e, Anil~C Kokaram, and Rozenn Dahyot.
\newblock Automated colour grading using colour distribution transfer.
\newblock \emph{Computer Vision and Image Understanding}, 107\penalty0
  (1-2):\penalty0 123--137, 2007.

\bibitem[Puigcerver et~al.(2022)Puigcerver, Jenatton, Riquelme, Awasthi, and
  Bhojanapalli]{Puigcerver2022adversarial}
Joan Puigcerver, Rodolphe Jenatton, Carlos Riquelme, Pranjal Awasthi, and
  Srinadh Bhojanapalli.
\newblock On the adversarial robustness of mixture of experts.
\newblock In \emph{Proceedings of the 36th Annual Conference on Neural
  Information Processing Systems (NeurIPS)}, 2022.

\bibitem[Riquelme et~al.(2021)Riquelme, Puigcerver, Mustafa, Neumann, Jenatton,
  Pinto, Keysers, and Houlsby]{Riquelme2021scaling}
Carlos Riquelme, Joan Puigcerver, Basil Mustafa, Maxim Neumann, Rodolphe
  Jenatton, Andr{\'e}~Susano Pinto, Daniel Keysers, and Neil Houlsby.
\newblock Scaling vision with sparse mixture of experts.
\newblock In \emph{Proceedings of the 35th Annual Conference on Neural
  Information Processing Systems (NeurIPS)}, 2021.

\bibitem[Roberts et~al.(2017)Roberts, Razoumov, Su, and Wang]{roberts_2017}
Lucas Roberts, Leo Razoumov, Lin Su, and Yuyang Wang.
\newblock Gini-regularized optimal transport with an application to
  spatio-temporal forecasting.
\newblock \emph{arXiv preprint arXiv:1712.02512}, 2017.

\bibitem[Rockafellar \& Wets(2009)Rockafellar and Wets]{rockafellar_2009}
R~Tyrrell Rockafellar and Roger J-B Wets.
\newblock \emph{Variational analysis}, volume 317.
\newblock Springer Science \& Business Media, 2009.

\bibitem[Roller et~al.(2021)Roller, Sukhbaatar, Szlam, and
  Weston]{Roller2021hash}
Stephen Roller, Sainbayar Sukhbaatar, Arthur Szlam, and Jason~E Weston.
\newblock Hash layers for large sparse models.
\newblock In \emph{Proceedings of the 35th Annual Conference on Neural
  Information Processing Systems (NeurIPS)}, 2021.

\bibitem[Sander et~al.(2022)Sander, Ablin, Blondel, and Peyr{\'e}]{sander_2022}
Michael~E Sander, Pierre Ablin, Mathieu Blondel, and Gabriel Peyr{\'e}.
\newblock Sinkformers: Transformers with doubly stochastic attention.
\newblock In \emph{Proceedings of the 25th International Conference on
  Artificial Intelligence and Statistics (AISTATS)}, pp.\  3515--3530. PMLR,
  2022.

\bibitem[Sarlin et~al.(2020)Sarlin, DeTone, Malisiewicz, and
  Rabinovich]{sarlin_2020}
Paul-Edouard Sarlin, Daniel DeTone, Tomasz Malisiewicz, and Andrew Rabinovich.
\newblock {SuperGlue}: Learning feature matching with graph neural networks.
\newblock In \emph{Proceedings of the IEEE/CVF Conference on Computer Vision
  and Pattern Recognition (CVPR)}, pp.\  4938--4947, 2020.

\bibitem[Shazeer et~al.(2017)Shazeer, Mirhoseini, Maziarz, Davis, Le, Hinton,
  and Dean]{Shazeer2017outrageously}
Noam Shazeer, Azalia Mirhoseini, Krzysztof Maziarz, Andy Davis, Quoc Le,
  Geoffrey Hinton, and Jeff Dean.
\newblock Outrageously large neural networks: The sparsely-gated
  mixture-of-experts layer.
\newblock In \emph{Proceedings of the 5th International Conference on Learning
  Representations {(ICLR)}}, 2017.

\bibitem[Sinkhorn \& Knopp(1967)Sinkhorn and Knopp]{sinkhorn}
Richard Sinkhorn and Paul Knopp.
\newblock Concerning nonnegative matrices and doubly stochastic matrices.
\newblock \emph{Pacific Journal of Mathematics}, 21\penalty0 (2):\penalty0
  343--348, 1967.

\bibitem[Solomon et~al.(2014)Solomon, Rustamov, Leonidas, and
  Butscher]{w_propagation}
Justin Solomon, Raif Rustamov, Guibas Leonidas, and Adrian Butscher.
\newblock Wasserstein propagation for semi-supervised learning.
\newblock In \emph{Proceedings of the 31st International Conference on Machine
  Learning (ICML)}, pp.\  306--314, 2014.

\bibitem[Sun et~al.(2017)Sun, Shrivastava, Singh, and Gupta]{Sun2017revisiting}
Chen Sun, Abhinav Shrivastava, Saurabh Singh, and Abhinav Gupta.
\newblock Revisiting unreasonable effectiveness of data in deep learning era.
\newblock In \emph{Proceedings of the IEEE international conference on computer
  vision}, pp.\  843--852, 2017.

\bibitem[You et~al.(2022)You, Feng, Su, and Yu]{You2022speechmoe2}
Zhao You, Shulin Feng, Dan Su, and Dong Yu.
\newblock {SpeechMoE2}: Mixture-of-experts model with improved routing.
\newblock In \emph{Proceedings of the 47th IEEE International Conference on
  Acoustics, Speech and Signal Processing (ICASSP)}, pp.\  7217--7221, 2022.

\end{thebibliography}
\end{document}